\documentclass[letterpaper]{article}
\usepackage[preprint]{aaai2027}
\usepackage[hyphens]{url}
\usepackage{graphicx}
\usepackage{natbib}
\usepackage{caption}
\usepackage{booktabs}
\usepackage{amsmath,amssymb,amsthm,mathtools}
\usepackage{bm}
\usepackage{algorithm}
\usepackage{algpseudocode}
\usepackage{enumitem}
\usepackage{tikz}
\usetikzlibrary{arrows.meta,positioning,calc}
\DeclareCaptionStyle{ruled}{labelfont=normalfont,labelsep=colon,strut=off}
\theoremstyle{plain}
\newtheorem{theorem}{Theorem}
\newtheorem{proposition}[theorem]{Proposition}
\newtheorem{lemma}[theorem]{Lemma}
\newtheorem*{unnumberedprop}{Proposition}
\theoremstyle{definition}

\newtheorem{remark}[theorem]{Remark}

\newcommand{\R}{\mathbb{R}}
\newcommand{\E}{\mathbb{E}}
\newcommand{\Prob}{\mathbb{P}}
\newcommand{\T}{\mathcal{T}}
\newcommand{\X}{\mathcal{X}}
\newcommand{\softmax}{\operatorname{softmax}}
\DeclareMathOperator*{\argmax}{arg\,max}
\DeclareMathOperator*{\argmin}{arg\,min}

\title{Differentiating Through Dual Prices: End-to-End Policy Learning Under Capacity Constraints}
\author{
    Mohammadsaeed Haghi,\quad
    Mahdi Salmani,\quad
    Nima Kelidari
}
\affiliations{
    University of Southern California\\
    \{haghim, salmanis, kelidari\}@usc.edu
}

\begin{document}
\maketitle

\begin{abstract}
Many social services assign scarce resources, such as housing assistance or hospital interventions, to people who arrive one at a time: each arrival must receive a decision immediately, and the long-run usage of every resource must stay within its capacity. We study how to learn such an assignment policy from logged observational data. The standard pipeline is decision-blind: it first fits one outcome model per arm by regression, without considering the capacities, and then computes a dual price for each capacitated resource by solving an auxiliary optimization problem that depends on the fitted models. At decision time, the policy assigns each arrival the arm whose predicted outcome minus resource price is largest. We instead propose an end-to-end approach that trains the outcome models by differentiating an estimate of the deployed policy's value; crucially, this requires differentiating through the dual prices themselves. We study two formulations: an exact nonconvex one, and a convex relaxation whose optimum always satisfies the capacity constraints in expectation and which we prove is suboptimal by at most a term that is linear in the smoothing temperature and logarithmic in the number of arms. Because decisions are sequential, we evaluate every method in a queueing simulation in which each resource is replenished at its capacity rate. Across six datasets, the two end-to-end variants take the top slots on a deployment-adjusted value index at every delay cost, including zero. When capacities are binding, decision-blind baselines frequently violate them and incur much longer queueing delays. On the largest dataset, a hospital cohort of seventy thousand patients, end-to-end training also achieves significantly higher policy value, a margin that survives a capacity-matched neural baseline. Flexible decision-blind regression remains the stronger pure predictor where ground truth is measurable; end-to-end training is best suited to settings where resources are genuinely scarce and feasibility matters.
\end{abstract}

\section{Introduction}
\label{sec:intro}

Many social services must assign scarce resources to people who arrive one at a time: housing assistance \citep{tang2024learning}, refugee resettlement \citep{bansak2024paulson}, donor organs \citep{bertsimas2013fairness}, and clinical triage \citep{argon2009priority}. Decisions are made online: when a person arrives, the planner observes their covariates, must choose an arm immediately, and cannot revisit the choice. Capacity couples these decisions: each resource can absorb only a fraction of the population in the long run, and a unit given to one person is unavailable until replenished. We study how to learn such a policy from logged observational data, where only the outcome of the arm actually chosen is ever recorded. A good policy is judged on the outcome value it delivers, on whether long-run usage stays within the capacities, and on how long people wait when demand for a resource temporarily exceeds its supply.

\citet{tang2024learning} study a predict-then-optimize (PtO) approach to this problem. First, fit one outcome model per arm by regression. Second, compute a dual price for each capacitated resource by solving an auxiliary optimization problem that depends on the fitted models. At decision time, each arrival receives the arm whose predicted outcome minus price is largest. Price-based rules of this kind have a long history, from bid-price controls in revenue management \citep{talluri1998analysis} to organ allocation \citep{bertsimas2013fairness}: they are cheap to evaluate online, and the prices ensure that long-run resource usage does not exceed capacity. The weakness of the pipeline is in how the models are trained. Squared-error regression does not consider the resource capacities at all: the deployed policy changes only where a predicted outcome crosses a price boundary, so a prediction error that flips an assignment is treated in training exactly like one that changes nothing. We call this pipeline \emph{decision-blind}. Its cost appears at deployment, where small errors near the price boundary reassign scarce resources, push usage over the capacities, and create queues.

Two existing literatures each address one half of this problem. Off-policy policy learning \citep{swaminathan2015batch,kitagawa2018should,athey2021policy} trains a policy directly on an estimate of its value, so training is aware of decisions; but its policy classes carry no notion of long-run capacity, so feasibility must be restored after the fact, and that correction, rather than the learned policy, determines who is served. Decision-focused learning \citep{elmachtoub2022smart,donti2017task,wilder2019melding,mandi2024decision} trains predictive models against a downstream optimization problem, but in its standard form it assumes the predicted quantities have observed labels and that each decision problem is solved per instance. Our setting has neither: the natural label is a counterfactual outcome that is never observed, and the capacity constraint couples the entire population. The open problem is to train the outcome models against an estimate of deployed value while keeping the price-based rule that keeps long-run usage within capacity.

We solve this problem by making the prices part of the learner. The outcome models and the prices are linked in a bilevel optimization problem: the inner problem computes the prices for the current models exactly as deployment would, and the outer problem maximizes an inverse-propensity-weighted (IPW) estimate of the deployed policy's value, with the gradient passed through the prices by implicit differentiation of the inner optimality conditions. The same prices that keep the deployed policy within capacity also carry the training signal that tells the models which errors matter. We study two inner objectives. The nonconvex objective matches the deployed policy exactly. The convex relaxation has a unique price vector whose optimality conditions are precisely the capacity constraints, so every policy it trains satisfies them in expectation (Proposition~\ref{prop:feasibility}), and it is provably close to the exact objective, with a gap that is linear in the smoothing temperature and logarithmic in the number of arms (Proposition~\ref{prop:fg-bound}).

Because decisions are sequential, the evaluation is sequential too. Every method is deployed in the same queueing simulation, where resources are replenished at their capacity rates and arrivals wait when no unit is free, and one index summarizes delivered value, unmet demand, and waiting, with its single parameter, the cost of delay, swept rather than fixed. Across six datasets, four of them real or built on real covariates, the two end-to-end variants take the top slots on the combined index at every delay cost, including zero, where the index reduces to pure deployed value. When capacities are binding, decision-blind baselines frequently violate them and wait several times longer in queue. On the largest dataset, a seventy-thousand-patient hospital cohort, end-to-end training achieves significantly higher policy value, and the margin survives a capacity-matched neural baseline. We also report where our method does not win: flexible decision-blind regression remains the stronger pure predictor where ground truth is measurable, and a constructed mechanism dataset exhibits one regime where end-to-end training dominates and one where decision-blind training does, with the structural reason for each.

This paper makes four contributions. First, we formulate joint training of the outcome models and the dual prices as a bilevel optimization problem and show how to differentiate through the implicitly defined prices (Section~\ref{sec:methodology}). Second, we give a convex relaxation whose minimizer always satisfies the capacity constraints, prove it close to the exact formulation, and include an alternating method that skips the implicit gradient so its contribution can be measured (Sections~\ref{sec:bilevel} and~\ref{sec:fvsg}). Third, we build an evaluation that reflects deployment: a queueing simulation with precisely stated dynamics and a value index whose one parameter is swept rather than fixed (Section~\ref{sec:index}). Fourth, we evaluate on six datasets with a capacity-matched neural baseline and report where our method wins and where it does not (Section~\ref{sec:experiments}).

\section{Related Work}
\label{sec:related}

\paragraph{Allocating scarce societal resources.}
A line of work in operations designs allocation rules for homelessness services, refugee resettlement, organ exchange, and triage \citep{tang2024learning,bansak2024paulson,bertsimas2013fairness,argon2009priority}. The common paradigm is predict-then-optimize: an outcome or match-quality model is fit first, and the assignment mechanism (prices, matchings, or index rules) treats its predictions as truth. \citet{bansak2024paulson} study outcome-driven online assignment with allocation balancing, alternating between assignment decisions and price updates. Our question is complementary: how the outcome models themselves should be trained when their errors propagate through the prices into deployed decisions, and we keep the price-based rule of \citet{tang2024learning} throughout.

\paragraph{Constrained and online allocation.}
Budget-constrained treatment assignment enters econometrics with \citet{bhattacharya2012inferring}; bid-price controls in revenue management use dual prices for the same purpose \citep{talluri1998analysis}; bandits with knapsacks and constrained contextual bandits study the same object online, with dual prices governing consumption \citep{badanidiyuru2018bandits,agrawal2016linear}; and the online-LP literature supplies the dual-price rule's asymptotic-optimality guarantees \citep{agrawal2014dynamic,li2022online}. These lines take the outcome model as given or learn it from revealed online feedback; our question is how to train the offline estimator through the prices it will be deployed with.

\paragraph{Off-policy policy learning.}
Counterfactual risk minimisation and empirical welfare maximisation train policies directly against off-policy value estimates \citep{swaminathan2015batch,swaminathan2015counterfactual,kitagawa2018should,athey2021policy}, with regret guarantees over restricted policy classes. These methods supply our outer objective, but their policy classes do not encode long-run capacity: an unconstrained IPW maximiser can prescribe the scarcest treatment for everyone, leaving feasibility to a post-hoc correction that then decides who is actually served. We instead build the prices into the policy class, so feasibility in expectation is a property of the trained policy itself (Proposition~\ref{prop:feasibility}).

\paragraph{Decision-focused learning and its efficiency frontier.}
Decision-focused learning trains predictors against the downstream decision objective, through differentiable optimisation layers \citep{amos2017optnet,agrawal2019differentiable,wilder2019melding,donti2017task} or solver-aware surrogate losses such as SPO+ \citep{elmachtoub2022smart}; \citet{mandi2024decision} survey the field. These losses are all defined against observed labels and per-instance decisions; the setting of Section~\ref{sec:formulation} has neither.

Because both DFL routes keep a solver in the training loop, a recent line reduces solver dependence: solution caching \citep{mulamba2021contrastive}, learned decision losses \citep{shah2022decision}, and, closest to us, the Dual-Guided Loss of \citet{rodriguezdiaz2025dual}, which refreshes the dual variables of a combinatorial selection problem only periodically and trains between refreshes on dual-adjusted supervised targets. We share their central object, the duals of the decision problem, but differ in every load-bearing assumption: their learner regresses on observed labels, where our target is a counterfactual outcome that is never observed and must be replaced by an off-policy value estimate; their duals price per-instance one-of-many constraints, where ours price a population-level long-run cap and double as the deployed decision rule; and where they avoid differentiating through the optimiser, our price map is cheap to differentiate through (one small dense solve). Our alternating baseline is the analogue of their shortcut in this setting, so our experiments answer ``when does the implicit gradient matter?'' directly: the full gradient wins 24 of 25 cells on ground-truth grids, while at the largest real-data scale the shortcut ties or edges ahead, which delimits, rather than contradicts, their efficiency thesis.

\section{Problem Formulation}
\label{sec:formulation}

A stream of individuals arrives over time. Individual $i$ has covariates $X_i\in\X\subseteq\R^d$ drawn i.i.d.\ from an unknown $\Prob_X$. The treatment set is $\T=\{0,1,\dots,K\}$, with $t=0$ the unconstrained no-treatment option. Each individual carries potential outcomes $\{Y_i^t\}_{t\in\T}$; at decision time the planner observes $X_i$ only, and after assigning $T_i$ observes the factual outcome $Y_i=Y_i^{T_i}$.

A (possibly randomised) policy is $\pi:\X\to\Delta(\T)$ with components $\pi_t(x)$. The outcome surface $m_t(x):=\E[Y^t\mid X=x]$ must be estimated from a historical log $\{(X_i,T_i,Y_i,e_{T_i}(X_i))\}_{i=1}^N$, where $e_t(x):=\Prob(T=t\mid X=x)$ is the logging propensity; when the propensity or a feature scaler must be estimated, we fit it on the training split only. We assume strong ignorability and overlap ($e_t(x)\ge\underline e>0$), the standard off-policy conditions.

Each scarce treatment has a long-run capacity $b_t\in[0,1]$, the fraction of the population absorbable per unit time ($b_0=1$). The planner maximises expected outcome subject to long-run feasibility,
\begin{align}
\label{eq:primal}
z^\star = \max_{\pi\in\Pi}\;&\E_X\!\Bigl[\textstyle\sum_{t\in\T}\pi_t(X)\,m_t(X)\Bigr]\nonumber\\
&\;\text{s.t.}\;\; \E_X[\pi_t(X)]\le b_t\;\;\forall t\in\T.
\end{align}
The decision is per-arrival but the constraints are population-level; dual prices resolve this coupling. Introducing multipliers $\mu_t\ge 0$ gives the Lagrangian dual \citep{tang2024learning}
\begin{equation}
\label{eq:dual}
\nu^\star
=\min_{\mu\ge 0}\;\E_X\!\Bigl[\max_{t\in\T}\bigl(m_t(X)-\mu_t\bigr)\Bigr]+\textstyle\sum_{t\in\T}\mu_t b_t,
\end{equation}
whose optimal policy assigns $a^\star(x)\in\argmax_{s}\bigl(m_s(x)-\mu_s^\star\bigr)$: when a person arrives, compute $m_t(x)-\mu_t$ and assign. The two-stage baseline of \citet{tang2024learning} fits $\widehat m_t$ by per-arm squared-error regression, then with $\widehat m_t$ frozen solves the sample dual of \eqref{eq:dual} (a small LP) for $\widehat\mu$ and deploys the hard argmax. Stage one has no knowledge of stage two: every unit of accuracy in $\widehat m_t$ is treated as equally valuable even though only the argmax near a price boundary affects the policy.

To train the prediction stage end-to-end against a policy-value objective, we smooth the argmax. Given scores $m_{t,\theta}(x)$ and prices $\mu$,
\begin{equation}
\label{eq:softmax-policy}
\pi_{\theta,\mu,t}(x)
=\frac{\exp\bigl((m_{t,\theta}(x)-\mu_t)/\tau\bigr)}{\sum_{s\in\T}\exp\bigl((m_{s,\theta}(x)-\mu_s)/\tau\bigr)}
\end{equation}
defines the policy class, with temperature $\tau>0$; as $\tau\downarrow 0$ it collapses to the hard rule of \eqref{eq:dual}. Because the data was logged under $e$ rather than under $\pi_{\theta,\mu}$, we evaluate candidates by inverse-propensity weighting \citep{swaminathan2015batch},
\begin{equation}
\label{eq:ipw}
\widehat V_{\mathrm{IPW}}(\theta,\mu)
=\frac{1}{N}\sum_{i=1}^N \pi_{\theta,\mu,T_i}(X_i)\,\frac{Y_i}{\widehat e_{T_i}(X_i)},
\end{equation}
unbiased for the population objective of \eqref{eq:primal} under the stated assumptions. The learned object is $\theta$; the prices are an implicit function of the network outputs, developed next.

\section{Methodology}
\label{sec:methodology}

\subsection{Bilevel Formulation}
\label{sec:bilevel}

We take $\mu$ to be the dual prices the LP relaxation would pick if $m_{t,\theta}$ were the truth. Collecting network outputs into $M_\theta:=(m_{t,\theta}(X_i))_{i,t}\in\R^{N\times|\T|}$, define the sample inner problem
\begin{align}
\label{eq:F}
\mu_\theta &= \argmin_{\mu\ge 0}\, F(\mu;M_\theta),\\
F(\mu;M) &:=\frac{1}{N}\sum_{i=1}^{N}\sum_{t\in\T} \sigma_{t,i}(\mu;M)\bigl(M_{i,t}-\mu_t\bigr)+\sum_{t\in\T} b_t\,\mu_t,\nonumber
\end{align}
where $\sigma_{t,i}(\mu;M)=\softmax\bigl((M_i-\mu)/\tau\bigr)_t$ are the same softmax weights as in \eqref{eq:softmax-policy}. The end-to-end training problem is the bilevel program
\begin{equation}
\label{eq:bilevel}
\max_\theta \;\widehat V_{\mathrm{IPW}}\!\bigl(\theta,\mu_\theta\bigr)
\quad
\text{s.t.}\quad \mu_\theta=\argmin_{\mu\ge 0} F(\mu;M_\theta):
\end{equation}
train $m_{t,\theta}$ so that, passed through the dual-price layer, the resulting policy allocates as well as possible. Throughout, $m_{t,\theta}$ is a small multilayer perceptron with a trunk shared across treatments and a $|\T|$-dimensional linear head, so every sample informs the shared representation regardless of its logged arm. The implicit definition of $\mu_\theta$ preserves the shadow-price interpretation that makes the policy feasible in expectation and online-implementable, and updates of $\theta$ that would blow the budget are corrected by re-pricing rather than hand-tuned penalties.

We solve \eqref{eq:bilevel} three ways. \textbf{Alt} holds $\mu$ fixed for a block of ascent steps on $\widehat V_{\mathrm{IPW}}(\theta,\mu)$, then refreshes $\mu\leftarrow\argmin_{\mu\ge0}F(\mu;M_\theta)$, and repeats. The total derivative of the objective in $\theta$ has a direct term and a term through the induced change in the prices; Alt keeps only the direct term, so the models never see how their own updates shift the prices. It is the analogue in our setting of the dual-guided shortcut of \citet{rodriguezdiaz2025dual}. The other two methods are end-to-end and differ only in the inner objective. Full pseudocode for all three is in Appendix~\ref{app:algorithms}.

\subsection{End-to-End with the Nonconvex Inner Objective}
\label{sec:ift}

$F$ is not convex in $\mu$, so standard convex layers \citep{agrawal2019differentiable} do not apply. We obtain $\mathrm{d}\mu^\star\!/\mathrm{d}M$ by implicit differentiation of the KKT conditions \citep{krantz2002implicit,amos2017optnet}: under the usual regularity assumptions of differentiable-optimisation layers, the backward reduces to one dense linear solve on a small $(|\T|+|A|)$-dimensional saddle-point system, where $A$ is the active set of $\mu\ge0$; the derivation and the explicit system are in Appendix~\ref{app:ift}. The forward is L-BFGS-B \citep{byrd1995limited}, warm-started from the previous outer iterate, which empirically keeps the nonconvex inner solve in one basin throughout training. We call this method \textbf{F}.

\subsection{End-to-End with the Convex Surrogate}
\label{sec:fvsg}

The nonconvexity of $F$ comes from $\sigma_{t,i}$ depending on $\mu$. Replacing the softmax-weighted dual by its log-sum-exp upper bound gives
\begin{equation}
\label{eq:G}
G(\mu;M):=\frac{1}{N}\sum_{i=1}^N \tau\log\!\sum_{t\in\T}\exp\!\left(\frac{M_{i,t}-\mu_t}{\tau}\right) + \sum_{t\in\T} b_t\,\mu_t,
\end{equation}
convex in $\mu$, so $\argmin_{\mu\ge0}G$ is the unique global minimiser; $G$ is the classical entropic (Nesterov) smoothing of the sample dual \citep{nesterov2005smooth}. We call the resulting end-to-end method \textbf{G}; it uses the same implicit-differentiation route, with convexity guaranteeing a positive-definite inner Hessian, so the regularity assumptions hold automatically. The surrogates are uniformly close (proof in Appendix~\ref{app:fvsg}):

\begin{proposition}[$F$--$G$ bound]
\label{prop:fg-bound}
For every $\mu\in\R^{|\T|}_{+}$, $M\in\R^{N\times|\T|}$, and $\tau>0$,
\begin{equation}
\label{eq:fg-bound}
F(\mu;M)\;\le\;G(\mu;M)\;\le\;F(\mu;M)+\tau\log|\T|.
\end{equation}
\end{proposition}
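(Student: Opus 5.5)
The bound reduces to a per-individual statement, since both $F$ and $G$ share the linear term $\sum_t b_t\mu_t$ and average over $i$ with the same weights $1/N$. Fix $i$ and write $z_t := (M_{i,t}-\mu_t)/\tau$ and $p := \softmax(z)$, so that $\sigma_{t,i}(\mu;M)=p_t$. The $i$-th summand of $F$ is then $\tau\sum_t p_t z_t$, and that of $G$ is $\tau\,\mathrm{LSE}(z)$ with $\mathrm{LSE}(z)=\log\sum_s e^{z_s}$. It therefore suffices to show, for every $z\in\R^{|\T|}$,
\begin{equation*}
\sum_t p_t z_t \;\le\; \mathrm{LSE}(z) \;\le\; \sum_t p_t z_t + \log|\T|,
\end{equation*}
then multiply by $\tau>0$, average over $i$, and add the common linear term. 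The constraint $\mu\ge 0$ plays no role, so the bound in fact holds for all $\mu\in\R^{|\T|}$.

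The key step is the standard entropy identity for the softmax. Since $\log p_t = z_t - \mathrm{LSE}(z)$ and $\sum_t p_t=1$,
\begin{equation*}
\sum_t p_t z_t = \sum_t p_t\bigl(\log p_t + \mathrm{LSE}(z)\bigr) = \mathrm{LSE}(z) - H(p),
\end{equation*}
where $H(p)=-\sum_t p_t\log p_t$ is the Shannon entropy. Hence $\mathrm{LSE}(z) = \sum_t p_t z_t + H(p)$.

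Both inequalities then follow from $0\le H(p)\le \log|\T|$. The lower bound holds because each $p_t\in(0,1]$ gives $-p_t\log p_t\ge 0$. The upper bound is maximal entropy on $|\T|$ outcomes, which follows from Jensen's inequality applied to the concave $\log$:
\begin{equation*}
H(p)=\sum_t p_t\log(1/p_t)\le\log\sum_t p_t\cdot(1/p_t)=\log|\T|.
\end{equation*}
Here $p_t>0$ for every $t$, so there are no zero-probability terms to handle.

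No step is a genuine obstacle. The only point needing care is the bookkeeping: confirming that $\sigma_{t,i}$ in $F$ is exactly the softmax of the same $z$ that appears inside the log-sum-exp in $G$, including the temperature scaling, so that the identity applies term by term with the factor $\tau$ in the right place.
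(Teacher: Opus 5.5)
Your proof is correct and matches the paper's argument: the per-individual entropy identity $\tau\log\sum_t e^{a_{t,i}}-\tau\sum_t\sigma_{t,i}a_{t,i}=\tau H(\sigma_i)$, combined with $0\le H(\sigma_i)\le\log|\T|$, then averaging over $i$ and adding the common capacity term. Your Jensen derivation of the maximal-entropy bound and your remark that $\mu\ge 0$ is never used are valid details that the paper leaves implicit.
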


Proposition~\ref{prop:fg-bound} says the cost of convexity is small: linear in the temperature, logarithmic in the number of arms. Taking $\tau$ very small closes the gap but is not free, since gradients then vanish away from decision boundaries and the inner problem approaches the nonsmooth LP; a moderate $\tau$ trades a small, known bias for stable training (annealing $\tau$ is a natural extension we do not pursue). The surrogate also buys more than uniqueness: its optimality conditions are exactly the capacity constraints of the smoothed policy.

\begin{proposition}[Exact feasibility in expectation]
\label{prop:feasibility}
Let $\mu^\star\in\argmin_{\mu\ge 0}G(\mu;M)$ and let $\pi^\star$ be the policy \eqref{eq:softmax-policy} at $(M,\mu^\star)$. Then $\frac{1}{N}\sum_{i} \pi^\star_{t,i}\le b_t$ for every $t\in\T$, with equality whenever $\mu^\star_t>0$.
\end{proposition}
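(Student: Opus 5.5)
The plan is to read the claim directly off the KKT conditions of the convex inner problem $\min_{\mu\ge0}G(\mu;M)$. First I will compute the gradient of $G$. The derivative of $\tau\log\sum_s\exp((M_{i,s}-\mu_s)/\tau)$ with respect to $\mu_t$ is $-\softmax\bigl((M_i-\mu)/\tau\bigr)_t=-\sigma_{t,i}(\mu;M)$, because the factor $\tau$ cancels the $1/\tau$ from the chain rule. Hence
\begin{equation*}
\frac{\partial G}{\partial \mu_t}(\mu;M)=b_t-\frac{1}{N}\sum_{i=1}^N \sigma_{t,i}(\mu;M).
\end{equation*}
At $\mu=\mu^\star$, the weights $\sigma_{t,i}(\mu^\star;M)$ coincide by definition with $\pi^\star_{t,i}$, the policy \eqref{eq:softmax-policy} evaluated at $(M,\mu^\star)$.

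Second, I will invoke first-order optimality for a differentiable convex function minimised over the nonnegative orthant. $G$ is convex (log-sum-exp composed with an affine map, plus a linear term) and smooth, and the constraint set is a polyhedron, so the KKT conditions are necessary at any minimiser with no further constraint qualification. They read: $\partial G/\partial\mu_t(\mu^\star)\ge0$ for all $t$, and $\partial G/\partial\mu_t(\mu^\star)=0$ whenever $\mu^\star_t>0$. Equivalently, each coordinate either sits at the boundary with a nonnegative partial derivative or is interior with a zero partial derivative. Substituting the gradient formula gives $\frac1N\sum_i\pi^\star_{t,i}\le b_t$, with equality when $\mu^\star_t>0$, which is exactly the statement.

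There is no real obstacle; the statement only assumes that a minimiser $\mu^\star$ exists, so existence need not be argued. The one point I will check is the unconstrained arm $t=0$. Because $b_0=1$ and $\sum_i\pi^\star_{0,i}/N\le1$ trivially, the inequality for $t=0$ holds regardless. If $\mu_0$ is treated as a free coordinate, the same KKT argument still covers it.
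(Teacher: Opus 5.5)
Your proposal is correct and matches the paper's proof: both compute $\partial G/\partial\mu_t = b_t-\frac1N\sum_i\sigma_{t,i}(\mu;M)$ and read the inequality and the complementary-slackness equality directly off the first-order optimality conditions of the convex, differentiable $G$ over $\mu\ge 0$. Your extra checks (no constraint qualification is needed for a polyhedral feasible set, existence is assumed by the statement, and the arm $t=0$ is handled trivially) are sound and harmless additions.
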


\begin{proof}
$\partial G/\partial \mu_t = b_t-\frac{1}{N}\sum_{i}\softmax\bigl((M_i-\mu)/\tau\bigr)_t$. $G$ is convex and differentiable, so first-order optimality over $\mu\ge0$ gives $\partial G/\partial\mu_t\ge 0$ with equality where $\mu^\star_t>0$.
\end{proof}

At the convex optimum, every priced treatment is used at exactly its capacity and every unpriced one at no more. In practice this matters because feasibility is often at least as important as optimality, especially optimality measured against an estimated outcome model: Proposition~\ref{prop:feasibility} guarantees that, no matter how inaccurate the fitted models are, the policy trained with $G$ satisfies the long-run capacity constraints in expectation. In a decision-blind pipeline, feasibility is a property of the pricing step alone, not of the trained models: the sample LP is feasible for any estimates, but nothing links what is learned to what is feasible. Appendix~\ref{app2:feasibility} adds uniqueness of $\mu^\star$ and a population version, and shows the analogous statement fails for stationary points of $F$, whose capacity excess grows roughly linearly in $\tau$ (Supporting Studies), which is why deployment uses the buffered LP and the queueing system described next. Both surrogates collapse to the hard dual of \eqref{eq:dual} as $\tau\downarrow 0$.

\subsection{Online Deployment and the Queueing Simulation}
\label{sec:repair}

The training objective enforces the capacity constraints in expectation, but deployment is sequential: arrivals are served one at a time, and demand can exceed the long-run rates for stretches of time. We therefore deploy every trained policy in the same queueing system, following \citet{tang2024learning} (schematic in Figure~\ref{fig:repair}). Each scarce treatment $t$ has its own first-in-first-out queue, and units of resource $t$ become available as an independent Poisson process with rate $b_t\lambda$, where $\lambda$ is the arrival rate: the capacities are long-run replenishment rates, not fixed stocks. When a person arrives, the policy assigns an arm; if a unit of that resource is available, the person is served immediately, and otherwise they wait in that arm's queue. Every method runs through the same system, so differences in queueing behaviour reflect the policies alone.

We next describe the simulation setup precisely. Each simulation draws $1{,}000$ Poisson arrivals at rate $\lambda=1$, each a person sampled uniformly from the evaluation split; every method faces the \emph{same} paired arrival and resource streams per seed. The simulation ends at $T_{\max}=1.5\times$ the last arrival time; an arrival still queued then counts as \emph{unserved}, defining the unserved fraction $u$, and its censored wait enters the mean wait $W$ over all arrivals. The one free constant is the $1.5$ multiplier; re-simulating at multipliers $1.25$--$3$ changes no method comparison.

\subsection{The Deployment-Adjusted Policy Value Index}
\label{sec:index}

Comparing capacity-constrained policies on IPW value alone hides both deployment failure modes: allocating over capacity, which the queue turns into waits and unserved arrivals, and allocating so conservatively that scarce value goes unused. We aggregate what the queue delivers into
\begin{equation}
\label{eq:dapv}
\mathrm{DAPV}(\kappa)\;=\;(1-u)\,V_{\mathrm{served}}\;+\;u\,V_0\;-\;\kappa\,W,
\end{equation}
where $V_{\mathrm{served}}$ is realised value over served arrivals (ground truth where potential outcomes are known, IPW otherwise) and $V_0$ is the no-treatment value, so never serving someone is priced at what they would have received anyway. Feasibility needs no weight of its own: an infeasible policy loses value mechanically through $u$ and $W$. The single free parameter $\kappa\ge0$, the cost of one waiting period as a fraction of the value at stake, is \emph{swept}, not chosen; we report crossover points so each reader applies their own delay-aversion. For cross-dataset aggregation, values are normalised per dataset (random $=0$, best at $\kappa{=}0$ is $1$) and the combined index is the unweighted mean.

\section{Experiments}
\label{sec:experiments}

\paragraph{Suite.}
We evaluate on six datasets chosen so that each buys something the others cannot (Table~\ref{tab:suite}): ground truth (synthetic and semi-synthetic, scoring deployed value against known potential outcomes), identification (a randomised trial with known propensities), scale (a seventy-thousand-patient observational cohort), and a \emph{control}: Criteo, a two-arm setting with generous caps where every learned method should tie. Two datasets are constructed: a ten-arm synthetic that stress-tests the price system at the largest treatment count, and a mechanism dataset built to isolate \emph{why} the pipelines separate.

\begin{table}[t]
\centering
\small
\setlength{\tabcolsep}{3.4pt}
\begin{tabular}{@{}lrrl@{}}
\toprule
Dataset & $N_{\mathrm{dep}}$ & $|\T|$ & Caps \\
\midrule
Non-nested synthetic (truth) & 4{,}000 & 10 & $0.10$ \\
Mechanism, truth (this paper) & 32{,}000 & 2 & $0.25$ \\
Adult semi-synth., truth \citep{kohavi1996} & 16{,}000 & 8 & $0.08$ \\
ACTG 175, RCT \citep{hammer1996} & 1{,}497 & 4 & $0.30$ \\
Criteo Uplift, RCT \citep{diemert2018criteo} & 32{,}000 & 2 & $0.50$ \\
Diabetes 130-US, obs.\ \citep{strack2014} & 48{,}981 & 3 & $0.25$ \\
\bottomrule
\end{tabular}
\caption{Evaluation suite. $N_{\mathrm{dep}}$ is the largest training size per sweep; caps apply per scarce arm; 10 seeds per cell. Propensities are known for randomised data, else fit on the training split only.}
\label{tab:suite}
\end{table}

\paragraph{Methods and protocol.}
We run \textbf{F}, \textbf{G}, and \textbf{Alt} from Section~\ref{sec:methodology}. The predict-then-optimize family \textbf{PtO}, the two-stage pipeline of \citet{tang2024learning}, varies the outcome estimator: \textbf{PtO-linear} (OLS), \textbf{PtO-lasso}, \textbf{PtO-tree} (depth 5), \textbf{PtO-knn}, \textbf{PtO-dr} (doubly-robust pseudo-outcomes), and \textbf{PtO-mlp}, the \emph{capacity-matched control}: per-arm networks with the same trunk architecture, width, training steps, and learning rate as our score model, so no end-to-end gain can be attributed to function-class mismatch. Random and treat-all anchors calibrate scales; an \emph{unconstrained} IPW maximiser (\textbf{IPW-unc}: trained with $\mu\equiv0$, deployed as pure argmax) is the direct off-policy-learning baseline, its feasibility left entirely to the queue; Alt doubles as the dual-guided baseline of Related Work. All run on every dataset. Sweeps vary training size $N$ on a log grid, 10 seeds per cell; every policy deploys through the queueing layer. All hyperparameters ($\tau=0.03$ for F and Alt, $\tau=0.1$ for G, and a $0.92$ capacity buffer on the end-to-end deployment LP) were fixed a priori on the first synthetic configuration and reused unchanged on all six datasets; the temperature study varies $\tau$ over two orders of magnitude. PtO deploys at its stated caps, as specified by \citet{tang2024learning}; a buffer ablation in Appendix~\ref{app:ablations} shows no deployment buffer substitutes for decision-aware training, restoring PtO's feasibility without recovering its value while never binding for the end-to-end policies. We report held-out IPW value, ground-truth deployed value where it exists, deployed share of each capped arm, and the $u$ and $W$ of Section~\ref{sec:repair}.

\subsection{Ground Truth: What Each Pipeline Wins}
\label{sec:exp-adult}

Figure~\ref{fig:adultsemi} shows the Adult semi-synthetic sweep (real census covariates, eight arms, oracle-scored deployment). The deployment panels separate immediately: the end-to-end methods hold their deployed shares under the $0.08$ caps and clear their queues in $12$--$16$ periods, while the two-stage value leaders sit at or above the caps and queue $31$--$56$. On value, the end-to-end methods overtake the \emph{misspecified} linear two-stage family from $N\approx4{,}000$, and F beats Alt at essentially every cell in the suite (24 of 25 on the regime grid), direct evidence that the implicit-differentiation term is worth its cost. Raw deployed value on this suite, however, belongs to flexible regression: PtO-tree leads to $N{=}8{,}000$, the capacity-matched PtO-mlp takes over at deployment scale ($1.49$ vs.\ $1.30$ at $N{=}16{,}000$), and a $175$-cell grid over constraint tightness, outcome nonlinearity, and noise (Appendix~\ref{app:regime}) confirms the ranking is no artefact of one configuration; our pre-specified hypothesis to the contrary was \emph{refuted} (gap $-0.38\to-0.21$ under noise, never flipping) and is reported as found. The advantage of end-to-end training is not raw prediction value; the following sections measure what it is.

\begin{figure*}[t]
\centering
\includegraphics[width=0.92\textwidth]{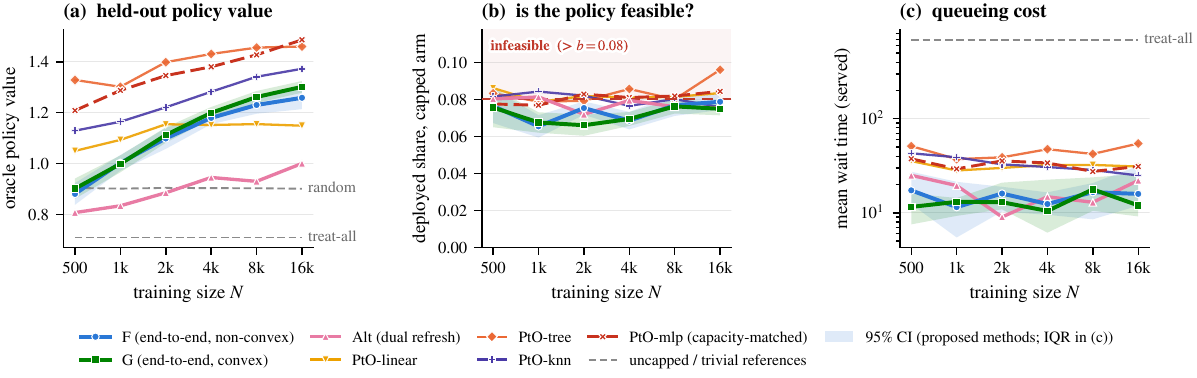}
\caption{Adult semi-synthetic (8 arms, caps $0.08$, oracle-scored): deployed ground-truth value, deployed share of capped arms against the caps, and median wait, versus training size $N$ (10 seeds, 95\% bands).}
\label{fig:adultsemi}
\end{figure*}

\subsection{A Real Randomised Trial: Values Tie, Deployment Separates}
\label{sec:exp-actg}

ACTG 175 \citep{hammer1996} randomised $2{,}139$ HIV-positive patients across four antiretroviral arms, giving known propensities and clean identification; we cap the combination-therapy arms at $0.30$ and score IPW value of CD4 outcomes. On value, all learned methods tie within noise ($3.90$--$4.04$ at full $N$). On deployment they do not: every two-stage baseline deploys the capped arms at $0.30$--$0.32$, at or above cap, while the end-to-end methods hold $0.24$--$0.28$; the queueing delays reflect the difference: $14$--$25$ waiting periods against $4$--$12$ for ours (Figure~\ref{fig:actg}). This experiment isolates the paper's main point: with estimation error removed as a confound by randomisation, the gain from end-to-end training is a policy whose deployed allocation respects the constraint it was trained under.

\begin{figure*}[t]
\centering
\includegraphics[width=0.84\textwidth]{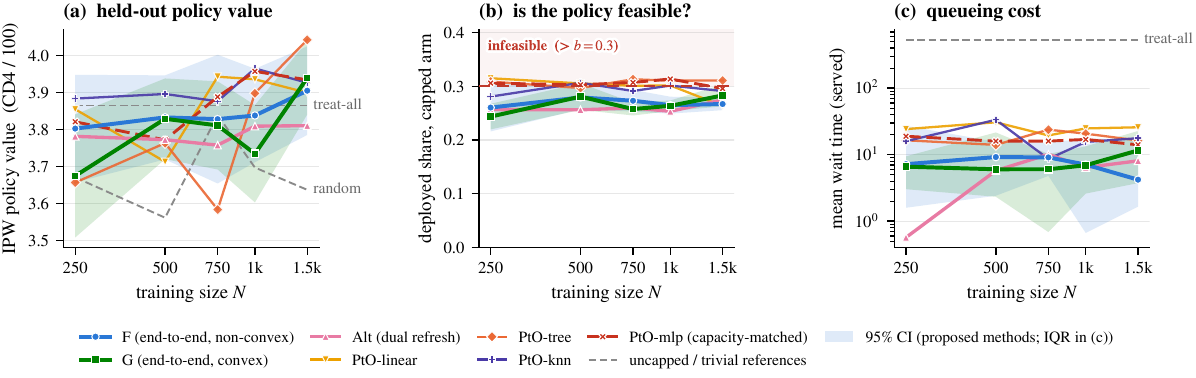}
\caption{ACTG 175 (real 4-arm RCT, caps $0.30$ on combination arms): held-out IPW value, deployed share of the capped arms, and median wait versus $N$.}
\label{fig:actg}
\end{figure*}

\subsection{Scale: The First Significant Raw-Value Win}
\label{sec:exp-diabetes}

The Diabetes 130-US Hospitals cohort \citep{strack2014} contributes what a trial cannot: scale. After standard preprocessing (one encounter per patient; death/hospice discharges removed), $69{,}973$ patients remain; arms encode the HbA1c-testing intervention analysed by \citet{strack2014} (no test; test without medication change; test with change), the outcome is avoidance of 30-day readmission, and testing capacity is capped at $0.25$ per testing arm. Identification is observational, so ignorability given admission-level covariates is an assumption; the trial above and this cohort are deliberate complements. At $N{=}48{,}981$, training against the prices wins raw held-out IPW value outright: F reaches $0.989$ and the alternating variant Alt $0.991$, against $0.944$ for the best decision-blind method (PtO-dr); the paired per-seed differences are $+0.045$ for F ($t=9.2$) and $+0.047$ for Alt ($t=5.7$), while F and Alt are statistically indistinguishable from each other (paired $t=0.5$, 10 seeds). The shortcut matching the full gradient at this scale is the pattern noted in Related Work; the ground-truth grids show where the implicit gradient matters. The margin \emph{survives the capacity-matched control}: F beats PtO-mlp by $+0.051$ ($t=9.4$), against per-arm networks with our exact trunk, width, steps, and learning rate. A plausible mechanism: readmission is a rare outcome with weak heterogeneous effects, so per-arm regression spends its fit on the base rate and its LP prices inherit the residual noise, while the IPW objective optimises the decision margin directly, at a sample size large enough to control its variance. Operationally, all three price-trained methods deploy $0.215$--$0.238$ against the $0.25$ caps with median waits of $2$--$6$ periods; every decision-blind baseline deploys $0.243$--$0.266$ and waits $15$--$33$.

\subsection{One Number, One Swept Parameter}
\label{sec:exp-index}

Figure~\ref{fig:index} aggregates the five real and semi-synthetic sweeps with the index of Section~\ref{sec:index}. The two end-to-end methods take the first and second slots on the combined index at \emph{every} delay cost $\kappa$, including $\kappa=0$, where the index is pure deployed value (F leads the best two-stage method by $+0.16$ normalised points at $\kappa{=}0$ and $+0.43$ at $\kappa{=}0.01$; by $\kappa{=}0.02$ every two-stage method is at or below zero while F and G remain clearly positive). The construction does not favour our methods by design: unserved arrivals are priced at the no-treatment outcome rather than ignored, feasibility carries no hand-chosen weight, and on the tied Criteo control there is no gap to cross and none is reported.

\begin{figure*}[t]
\centering
\includegraphics[width=0.92\textwidth]{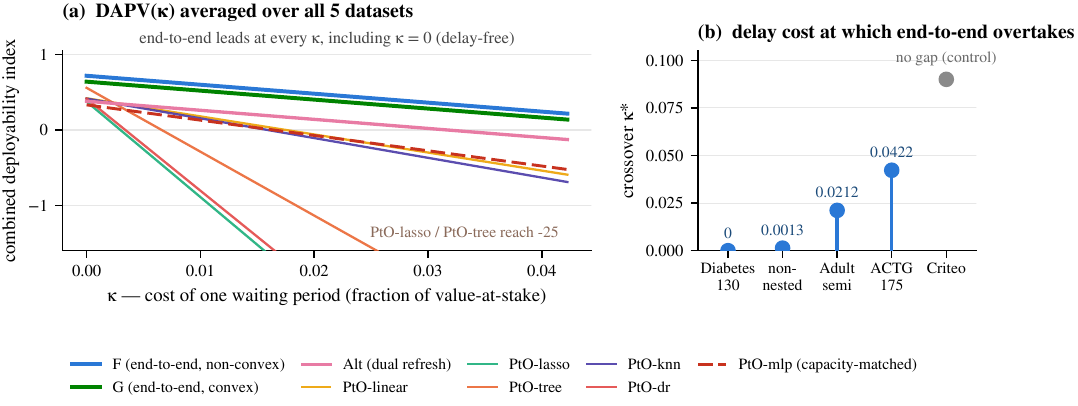}
\caption{Deployment-Adjusted Policy Value across five datasets. (a) Combined index versus the delay cost $\kappa$, its only parameter, swept on the axis. (b) Per-dataset crossover $\kappa^\star$ at which the best end-to-end method overtakes the best two-stage method; on the Criteo control the methods tie, so there is no gap to cross.}
\label{fig:index}
\end{figure*}

\subsection{A Mechanism Dataset: Why the Pipelines Separate}
\label{sec:exp-mechanism}

Finally, we construct a synthetic setting in which these mechanisms can be isolated exactly. A single severity score $s(x)=|v^\top x|$ along a fixed dense direction $v\in\R^{20}$ drives a smooth effect $\delta(x)=\tanh\!\bigl((s(x)-c)/0.35\bigr)$ crossing zero at a threshold, with $\Prob(\delta>0)=0.30$ and a cap of $0.25$: the population is dense exactly where allocation is decided, the price is strictly positive, and treating a non-responder is harmful ($\E[\delta]<0$). Logging is uniform with known propensities, isolating estimation. Each baseline class then fails for its own stated structural reason, none tuned: $\delta$ is \emph{even} in $v^\top x$, so the linear family (PtO-linear, PtO-lasso, PtO-dr) has exactly zero covariance with the effect at every $N$, its fitted arm-difference is flat, and the LP prices the arm out entirely (deployed share $0.000$); depth-5 axis-aligned trees cannot express $|v^\top x|$ along a dense direction and treat half-blind (value $0.024$ vs.\ oracle $0.203$); neighbourhood averaging attenuates the margin and under-allocates (PtO-knn $0.060$; paired F$-$knn $t=6.9$); and the capacity-matched PtO-mlp comes closest on value ($0.078$ vs.\ F's $0.088$) but margin noise pushes its deployed share to $0.263$, over the cap, and the queue then produces $12.0$ median waiting periods against $\le 0.28$ for end-to-end, a $57\times$ gap that persists under a $0.92$-buffered deployment LP. The result (Figure~\ref{fig:mechanism}a) is a value-versus-wait frontier on which no baseline matches the end-to-end methods on both axes at once: every feasible two-stage method concedes the value axis, and the one competitive on value concedes feasibility.

\begin{figure*}[t]
\centering
\includegraphics[width=0.92\textwidth]{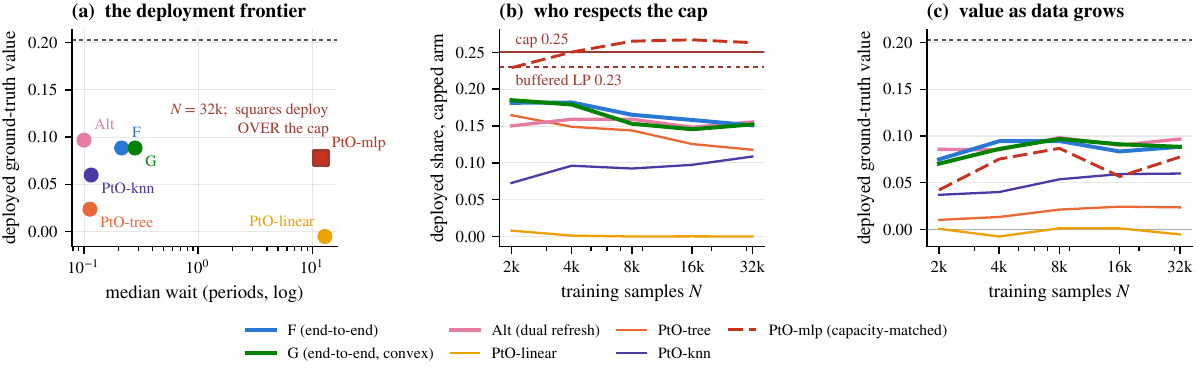}
\caption{Mechanism dataset (dense decision margin, cap $0.25<\Prob(\delta>0)=0.30$). (a) Value-versus-wait frontier at $N{=}32{,}000$; squares deploy over the cap. (b) Deployed share of the capped arm versus $N$. (c) Deployed ground-truth value versus $N$.}
\label{fig:mechanism}
\end{figure*}

\subsection{Supporting Studies}
\label{sec:studies}

Three further studies support the analysis. A temperature study confirms Proposition~\ref{prop:feasibility} operationally: G's trained policies are exactly feasible in expectation at every $\tau$ tested, while F's capacity excess grows from $1.1\%$ ($\tau{=}0.01$) to $28.2\%$ ($\tau{=}1$), which is why deployment routes through the buffered LP. A scaling study finds the implicit backward is not the bottleneck: the dense KKT solve is $70$--$282\times$ faster than routing the same convex inner problem through a generic differentiable-optimisation layer \citep{agrawal2019differentiable}. Criteo Uplift behaves as a control should: all learned methods tie on value and deploy feasibly, and the harness reports no separation where none should exist. The unconstrained maximiser shows why the prices are necessary: on five of six datasets it overshoots total capacity, and on the two ground-truth suites it routes the \emph{entire} population to capped arms, leaving $43\%$--$86\%$ of arrivals unserved with median waits $35$--$56\times$ ours. With $\mu\equiv0$, feasibility is decided entirely by the queue rather than by the learned policy (full table in Appendix~\ref{app:ablations}).

\section{Discussion}
\label{sec:discussion}

\paragraph{Summary.}
We trained the outcome models of a dual-price allocation policy end-to-end, differentiating an off-policy value estimate through the prices themselves, and evaluated by what the deployed policy delivers under a queue. The convex variant is always feasible in expectation (Proposition~\ref{prop:feasibility}) and provably close to the exact objective; the implicit gradient consistently beats the alternating shortcut; and across six datasets the end-to-end methods top the deployability index at every delay cost, hold their caps where every decision-blind baseline overshoots or concedes value, and win raw value significantly on the largest real cohort, surviving a capacity-matched neural baseline.

\paragraph{Limitations.}
Three limitations are worth stating plainly, and each is also a finding. Where ground truth is measurable, flexible two-stage regression keeps the raw-value lead at every training size; the advantage of end-to-end training is deployability, not prediction; our pre-specified hypothesis to the contrary was refuted and reported. Second, smooth outcome-level nuisance favours two-stage: regression can learn it, the IPW gradient pays for it in variance, and the capacity-matched control overtakes our methods when it dominates. Self-normalising the outer objective is not the fix: run at deployment scale it \emph{costs} value on both datasets tested (Appendix~\ref{app:ablations}); variance reduction that preserves the decision margin, a doubly-robust outer with cross-fitting, is the real future-work item. Third, the largest dataset is observational: its value win rests on ignorability given admission-level covariates, which is why it is paired with a randomised trial.

\paragraph{Future work.}
Beyond variance reduction, fairness terms enter the dual naturally, and the simulation extends to time-varying capacities and non-stationary arrivals. On the theory side, Appendix~\ref{app2:feasibility} extends Proposition~\ref{prop:feasibility} with uniqueness and a population version, and Appendix~\ref{app2:regret} builds an excess-value decomposition for both pipelines \citep{kitagawa2018should,athey2021policy}; its one idealisation, attainment of the IPW maximiser in training, remains open.

\bibliography{refs}

\appendix
\section{Proof of the $F$--$G$ Bound}
\label{app:fvsg}

\begin{unnumberedprop}[$F$--$G$ bound, restating Proposition~\ref{prop:fg-bound}]
For every $\mu\in\R^{|\T|}_{+}$, every $M\in\R^{N\times|\T|}$, and every $\tau>0$,
$F(\mu;M)\le G(\mu;M)\le F(\mu;M)+\tau\log|\T|$.
\end{unnumberedprop}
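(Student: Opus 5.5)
The plan is to reduce the claim to a per-sample, pointwise inequality and then use the Gibbs variational identity for log-sum-exp. Both $F$ and $G$ contain the same linear term $\sum_t b_t\mu_t$, and both average over $i$ with the same weight $1/N$. It therefore suffices to fix a row $i$, set $z_t:=M_{i,t}-\mu_t$ for $t\in\T$, and compare the softmax-weighted average $\sum_t\sigma_t z_t$ with $\tau\log\sum_t e^{z_t/\tau}$, where $\sigma=\softmax(z/\tau)$. Nothing about $\mu\ge 0$ or the structure of $M$ is used beyond this, so the bound holds for every $\mu$, $M$, $\tau$ at once.

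The key step is the identity, valid for any probability vector $p\in\Delta(\T)$:
\begin{equation*}
\tau\log\sum_{t\in\T} e^{z_t/\tau}\;=\;\sum_{t}p_t z_t+\tau H(p)+\tau\,\mathrm{KL}\bigl(p\,\|\,\sigma\bigr),
\end{equation*}
where $H(p)=-\sum_t p_t\log p_t$. This follows by writing $\log\sigma_t=z_t/\tau-\log\sum_s e^{z_s/\tau}$ and expanding the KL term. Evaluating the identity at $p=\sigma$ kills the KL term, leaving
\begin{equation*}
\tau\log\sum_t e^{z_t/\tau}=\sum_t\sigma_t z_t+\tau H(\sigma).
\end{equation*}

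The two bounds then follow from the standard range of entropy on $|\T|$ points, $0\le H(\sigma)\le\log|\T|$. The lower bound $H(\sigma)\ge 0$ gives the per-row version of $F\le G$. The upper bound $H(\sigma)\le\log|\T|$ follows from Jensen, or from nonnegativity of KL to the uniform distribution, and gives the per-row version of $G\le F+\tau\log|\T|$. Averaging over $i$ and adding $\sum_t b_t\mu_t$ to both sides yields the proposition.

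I do not expect a real obstacle. The only point needing care is confirming that the weights $\sigma_{t,i}$ in $F$ are exactly $\softmax((M_i-\mu)/\tau)$ at the same $\mu$ used in the log-sum-exp, so that the KL term vanishes; the definition of $F$ guarantees this. If a reader prefers to avoid the KL identity, a direct substitution of $\log\sigma_t$ into $\sum_t\sigma_t z_t$ gives the same equation in one line.
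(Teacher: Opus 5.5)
Your proposal is correct and matches the paper's proof. You both reduce to a single row, establish the identity $\tau\log\sum_t e^{z_t/\tau}=\sum_t\sigma_t z_t+\tau H(\sigma)$ (your Gibbs/KL identity at $p=\sigma$ is exactly the paper's direct substitution of $\log\sigma_{t,i}$), apply $0\le H(\sigma)\le\log|\T|$, then average over $i$ and add the common term $\sum_t b_t\mu_t$.
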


\begin{proof}
Fix $i$. Write $a_{t,i}=(M_{i,t}-\mu_t)/\tau$ so $\sigma_{t,i}=e^{a_{t,i}}/\sum_s e^{a_{s,i}}$, and let $H(\sigma_i)=-\sum_t \sigma_{t,i}\log\sigma_{t,i}\ge 0$ denote the Shannon entropy of $\sigma_i$. Define the per-individual contributions
\begin{gather*}
\phi_i(\mu)=\sum_t \sigma_{t,i}\,(M_{i,t}-\mu_t)=\tau\sum_t\sigma_{t,i}a_{t,i},\\
\psi_i(\mu)=\tau\log\sum_t e^{a_{t,i}}.
\end{gather*}
By definition of $\sigma_i$, $\log\sigma_{t,i}=a_{t,i}-\log\sum_s e^{a_{s,i}}$, so
\[
\sum_t \sigma_{t,i}\log\sigma_{t,i}
=\sum_t \sigma_{t,i}a_{t,i}-\log\sum_s e^{a_{s,i}},
\]
and rearranging yields the identity
\begin{equation}
\label{eq:lse-entropy}
\psi_i(\mu)-\phi_i(\mu)=\tau\,H(\sigma_i).
\end{equation}
Since $0\le H(\sigma_i)\le\log|\T|$ on the simplex, \eqref{eq:lse-entropy} gives $0\le\psi_i-\phi_i\le\tau\log|\T|$. Averaging over $i$ and adding the common capacity term $\sum_t b_t\mu_t$ yields the claim.
\end{proof}

\medskip\noindent The bound is also observed empirically: in the inner temperature study of Section~\ref{sec:studies}, the gap between the optimal values of $G$ and $F$ grows from $1.6\%$ of $\tau\log|\T|$ at $\tau=0.01$ to $80\%$ at $\tau=1$, remaining within the bound at every temperature tested.

\section{Feasibility in Expectation: Full Statement and Proof}
\label{app2:feasibility}

The bilevel objective encourages capacity feasibility through the inner price problem rather than enforcing it with an explicit penalty on the outer objective. The following result makes that mechanism precise for the convex surrogate $G$: at the inner optimum, the induced softmax allocation respects every capacity budget on the sample, and the shadow price of a treatment is nonzero only when its budget is saturated.

\begin{proposition}[Feasibility in expectation, extended form of Proposition~\ref{prop:feasibility}]
\label{prop:feas-full}
Fix a score matrix $M\in\R^{N\times|\T|}$ and $\tau>0$, and let $\mu^\star\in\argmin_{\mu\ge 0}G(\mu;M)$. Write $\bar\pi_t(\mu)=\frac1N\sum_{i=1}^N\sigma_{t,i}(\mu;M)$ for the average allocation to treatment $t$. Then
\begin{equation}
\label{eq:feas}
\bar\pi_t(\mu^\star)\;\le\;b_t
\qquad\text{for every }t\in\T,
\end{equation}
and complementary slackness holds: $\mu^\star_t>0 \Rightarrow \bar\pi_t(\mu^\star)=b_t$. If moreover $\sum_{t\in\T}b_t>1$, which holds throughout the paper since $b_0=1$ and every scarce arm has $b_t>0$, the minimiser $\mu^\star$ is unique. Finally, if $\{X_i\}$ are i.i.d.\ from $\Prob_X$ and $M_{i,t}=m_{t,\theta}(X_i)$, the population allocation of the deployed softmax policy satisfies
\[
\E_X\bigl[\pi_{\theta,\mu^\star,t}(X)\bigr]\;\le\;b_t+\varepsilon_N,
\qquad
\varepsilon_N=O_p\bigl(N^{-1/2}\bigr)
\]
up to logarithmic factors.
\end{proposition}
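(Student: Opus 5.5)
The proposition has four claims: sample feasibility, complementary slackness, uniqueness, and a population version. I would prove them in that order, after first showing that a minimiser exists.

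\emph{Existence.} For existence (and later compactness) I would use a coercivity argument based on the variational bound $\tau\log\sum_t e^{a_t/\tau}\ge\sum_t w_t a_t$, valid for any $w\in\Delta(\T)$. Since $\sum_t b_t>1$ and every $b_t>0$, I can choose $w\in\Delta(\T)$ with $w_t<b_t$ for every $t$. Averaging over $i$ then gives
\[
G(\mu;M)\ \ge\ \sum_t w_t\,\bar M_t+\sum_t (b_t-w_t)\,\mu_t,
\qquad \bar M_t=\tfrac1N\textstyle\sum_i M_{i,t}.
\]
So $G$ is coercive on $\R^{|\T|}_+$. It is also continuous and convex, so a minimiser exists. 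The same inequality yields an explicit box $\|\mu^\star\|_\infty\le R(M,b)$, which I reuse in the population step.

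\emph{Feasibility and complementary slackness.} These follow exactly as in Proposition~\ref{prop:feasibility}. The gradient is $\partial G/\partial\mu_t=b_t-\bar\pi_t(\mu)$. The KKT conditions over the nonnegative orthant give $b_t-\bar\pi_t(\mu^\star)\ge0$, with equality whenever $\mu^\star_t>0$.

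\emph{Uniqueness.} Each log-sum-exp term is strictly convex on every direction except the all-ones vector $\mathbf 1$. This is because its Hessian is $\tau^{-1}(\operatorname{diag}\sigma_i-\sigma_i\sigma_i^\top)$, and softmax has full support, so the kernel is exactly $\operatorname{span}(\mathbf 1)$.
\begin{itemize}
\item Suppose $\mu\neq\mu'$ are both minimisers. By convexity, $G$ is constant on the segment between them.
\item Hence every log-sum-exp term is affine along $d=\mu'-\mu$, which forces $d=c\mathbf 1$ for some $c\neq0$.
\item Along this direction, $G(\mu+c\mathbf 1)=G(\mu)+c\bigl(\sum_t b_t-1\bigr)$, because the log-sum-exp part drops by exactly $c$ for every $i$.
\item Since $\sum_t b_t\neq1$, this is strictly monotone in $c$, contradicting constancy. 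So $\mu^\star$ is unique.
\end{itemize}

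\emph{Population bound.} I would write
\[
\E_X[\pi_{\theta,\mu^\star,t}(X)]\le \bar\pi_t(\mu^\star)+\sup_{\mu\in[0,R]^{|\T|}}\bigl|\E_X\sigma_t(\mu;m_\theta(X))-\bar\pi_t(\mu)\bigr|.
\]
The first term is at most $b_t$ by the feasibility step. For the supremum I would use three ingredients:
\begin{itemize}
\item the map $\mu\mapsto\sigma_t$ is $(1/\tau)$-Lipschitz and bounded in $[0,1]$;
\item the box $[0,R]^{|\T|}$ admits an $\epsilon$-net with $(R/\epsilon)^{|\T|}$ points, with $\epsilon=N^{-1/2}$;
\item Hoeffding's inequality plus a union bound over the net.
\end{itemize}
Together these give $\varepsilon_N=O_p(\sqrt{|\T|\log N/N})$. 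The radius $R$ must also be bounded in probability, which I would get from the coercivity bound assuming bounded scores (or $\bar M_t$ concentrating).

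\emph{Main obstacle.} The main difficulty is that $\theta$ is trained on the same sample, so $M_{i,t}=m_{t,\theta}(X_i)$ are not independent of the data. I would first prove the bound for fixed $\theta$, for example fitted on a separate split. I would then extend it by taking the supremum jointly over $(\theta,\mu)$ and bounding the Rademacher complexity of the composed class $\{x\mapsto\sigma_t(\mu;m_\theta(x))\}$ via the contraction lemma. This extension is where the ``up to logarithmic factors'' qualification and any complexity assumption on the network class would enter.
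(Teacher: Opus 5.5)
Your proposal is correct and follows the paper's proof. Both use the first-order (KKT) conditions on $\partial G/\partial\mu_t=b_t-\bar\pi_t$ for feasibility and complementary slackness, the softmax-covariance Hessian with kernel $\operatorname{span}\{\mathbf 1\}$ together with the slope $\sum_t b_t-1$ along $\mathbf 1$ for uniqueness, and Hoeffding plus a $\tau^{-1}$-Lipschitz covering of a price box for the population claim.

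Your departures are refinements rather than a different route:
\begin{itemize}
\item The bound $\tau\log\sum_t e^{a_t/\tau}\ge\sum_t w_t a_t$ with $w_t<b_t$ gives coercivity, where the paper uses asymptotic directional derivatives along rays. It also supplies the explicit price box that the paper only asserts.
\item You rightly flag that $\theta$ must be independent of the sample or handled uniformly. The paper's proof tacitly sidesteps this by treating $\theta$ as fixed.
\end{itemize}
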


\begin{proof}
$G(\cdot;M)$ is convex and differentiable, with
\[
\frac{\partial G}{\partial \mu_t}(\mu;M)
=b_t-\frac1N\sum_{i=1}^N \sigma_{t,i}(\mu;M)
=b_t-\bar\pi_t(\mu),
\]
since $\partial_{\mu_t}\bigl[\tau\log\sum_s e^{(M_{i,s}-\mu_s)/\tau}\bigr]=-\sigma_{t,i}(\mu;M)$. The problem $\min_{\mu\ge 0}G$ has Lagrangian $G(\mu;M)-\lambda^{\!\top}\mu$ with $\lambda\ge 0$, and its KKT conditions at $\mu^\star$ read $\nabla_\mu G(\mu^\star;M)=\lambda^\star\ge 0$ and $\mu^\star_t\lambda^\star_t=0$. Stationarity is exactly $b_t-\bar\pi_t(\mu^\star)=\lambda^\star_t\ge 0$, which is \eqref{eq:feas}; complementary slackness then gives $\mu^\star_t>0\Rightarrow\lambda^\star_t=0\Rightarrow\bar\pi_t(\mu^\star)=b_t$. Because the problem is convex with affine constraints, these conditions are necessary and sufficient.

\emph{Existence and uniqueness.} The Hessian of the log-sum-exp part of $G$ is $\frac{1}{N\tau}\sum_i\bigl[\operatorname{diag}(\sigma_i)-\sigma_i\sigma_i^{\!\top}\bigr]$, positive semidefinite with null space exactly $\operatorname{span}\{\mathbf 1\}$: for each $i$, $v^{\!\top}[\operatorname{diag}(\sigma_i)-\sigma_i\sigma_i^{\!\top}]v=\operatorname{Var}_{\sigma_i}(v)$ vanishes only for constant $v$, since softmax weights have full support. Along $\mathbf 1$, $G$ is affine with slope $\sum_t b_t-1>0$; along any other direction it is strictly convex. Along a ray $\mu+cv$ with $v\ge0$, $v\neq0$, the derivative of the log-sum-exp part tends to $-\min_t v_t$ while the linear part contributes $b^{\!\top}v>\min_t v_t$ when $\sum_t b_t>1$, so $G$ is coercive on the nonnegative orthant and a minimiser exists. If two minimisers differed, convexity would make $G$ affine on the segment joining them, forcing the segment direction into the Hessian null space $\operatorname{span}\{\mathbf 1\}$; but $G$ has strictly positive slope along $\mathbf 1$, a contradiction. Hence $\mu^\star$ is unique.

\emph{Population version.} Each $\pi_{\theta,\mu^\star,t}(X_i)=\sigma_{t,i}(\mu^\star;M)$, so $\bar\pi_t(\mu^\star)$ is the empirical mean of a $[0,1]$-bounded function of $X_i$. For fixed $\mu$, Hoeffding's inequality gives an $O_p(N^{-1/2})$ deviation between $\bar\pi_t(\mu)$ and $\E_X[\pi_{\theta,\mu,t}(X)]$; since $\mu^\star$ is data-dependent, we pass to a supremum over $\mu$ in a compact price box, which the $\tau^{-1}$-Lipschitz dependence of the softmax weights on $\mu$ turns into a standard covering argument at the cost of a logarithmic factor. Combining with \eqref{eq:feas} yields the claim.
\end{proof}

Proposition~\ref{prop:feas-full} is the formal content of the claim that the dual-price layer builds capacity into the policy class: no capacity penalty appears in the outer objective, yet the inner optimum returns prices whose induced allocation is feasible in expectation, and the price of a treatment is the exact multiplier on its binding budget. This is what distinguishes the implicitly-priced policies from an unconstrained empirical-welfare maximiser \citep{kitagawa2018should,athey2021policy}, which optimises value with $\mu\equiv 0$ and is feasible only after post-hoc correction.

\begin{remark}[The nonconvex objective is approximately feasible]
\label{rem:F-feas}
For the softmax-weighted objective $F$ of \eqref{eq:F}, averaging the entropy identity \eqref{eq:lse-entropy} over $i$ gives $G-F=\tau\bar H$ with $\bar H(\mu)=\frac1N\sum_iH(\sigma_i(\mu))$, hence $\nabla_\mu F=\nabla_\mu G-\tau\nabla_\mu\bar H$, so a stationary point of $F$ satisfies, on its unpriced coordinates,
\[
b_t-\bar\pi_t(\mu^\star_F)\;=\;\tau\,\frac{\partial \bar H}{\partial\mu_t}(\mu^\star_F),
\]
a feasibility condition perturbed by an entropy-gradient correction whose sign is uncontrolled. Exact feasibility therefore need not hold for $F$; both objectives collapse to the hard dual as $\tau\downarrow0$, and empirically the capacity excess of $F$ grows roughly linearly in $\tau$, from $1.1\%$ at $\tau=0.01$ to $28.2\%$ at $\tau=1$ (Section~\ref{sec:studies}), which is why deployment uses the buffered LP and the queueing system.
\end{remark}

\section{An Excess-Value Decomposition}
\label{app2:regret}

Proposition~\ref{prop:fg-bound} bounds the gap between the two inner objectives and Proposition~\ref{prop:feasibility} gives feasibility; this section addresses \emph{value}: how far the trained policies can sit from the capacity-constrained optimum, and through which quantities each pipeline pays.

\paragraph{Oracle and value functional.}
Write $m_t(x)=\E[Y^t\mid X=x]$ for the true outcome surface and $e_t(x)=\Prob(T=t\mid X=x)$ for the logging propensity; strong ignorability and overlap are assumed throughout, so $m_t(x)=\E[Y\mid X=x,T=t]$. The value of a stochastic policy $\pi:\X\to\Delta(\T)$ is
\[
V(\pi)\;=\;\E_X\Bigl[\textstyle\sum_{t\in\T}\pi_t(X)\,m_t(X)\Bigr].
\]
Let $\mu^\star\in\R^{|\T|}_+$ denote the population dual prices of the hard allocation problem (the minimiser of the population dual \eqref{eq:dual}), and let
\[
\pi^\star_b(x)\in\argmax_{t\in\T}\bigl(m_t(x)-\mu^\star_t\bigr)
\]
be the capacity-constrained value-optimal (oracle) policy. Separately, for a score parameter $\theta$ let $\mu^{G}(\theta)$ denote the population prices of the convex inner problem, i.e.\ the minimiser over $\mu\ge0$ of the population version of \eqref{eq:G} at scores $m_{\cdot,\theta}$, and define the \emph{priced softmax class}
\[
\Pi_b\;=\;\bigl\{\pi_{\theta,\mu^{G}(\theta)}:\theta\in\Theta\bigr\},
\]
the policies of \eqref{eq:softmax-policy} with population-calibrated prices. Both the end-to-end estimator and the two-stage plug-in are compared against the single oracle $\pi^\star_b$, so the bounds below are commensurable.

\paragraph{Assumptions.}
\begin{itemize}
\item[(A0)] \emph{Oracle regularity.} Ties in $\argmax_t(m_t(x)-\mu^\star_t)$ occur with $\Prob_X$-probability zero, so $\pi^\star_b$ is almost-surely unique; strong duality holds, $\pi^\star_b$ is primal-optimal and feasible, $\E_X[\pi^\star_{b,t}(X)]\le b_t$, with complementary slackness $\E_X[\pi^\star_{b,t}(X)]=b_t$ whenever $\mu^\star_t>0$ \citep{tang2024learning}.
\item[(A1)] \emph{Overlap and known propensities.} $e_t(x)\ge\underline e>0$ for all $t,x$, and the propensities are known (the randomised-trial case). Estimated propensities are treated in Remark~\ref{rem:est-prop}.
\item[(A2)] \emph{Bounded outcomes.} $|Y|\le B_Y$ almost surely.
\item[(A3)] \emph{Policy-class complexity.} Viewed as the function class $\{(x,t)\mapsto\pi_t(x):\pi\in\Pi_b\}$ evaluated on the log, $\Pi_b$ has empirical Rademacher complexity $\mathfrak R_N(\Pi_b)\le\sqrt{\mathfrak C_N/N}$ for a complexity functional $\mathfrak C_N$; for a parametric score network, $\mathfrak C_N=O(\dim\Theta)$ up to logarithmic factors.
\item[(A4)] \emph{Optimisation oracle.} The end-to-end estimator returns a global maximiser $\widehat\pi\in\argmax_{\pi\in\Pi_b}\widehat V_{\mathrm{IPW}}(\pi)$. This isolates statistical error from the optimisation error of gradient ascent on the nonconvex outer objective and is the one idealisation we do not discharge (Remark~\ref{rem:open}).
\item[(A5)] \emph{Price non-degeneracy.} At the population prices $\mu^{G}(\theta)$ relevant in part~(ii) of Theorem~\ref{thm:regret}, the Hessian $\nabla^2_\mu G$ restricted to the priced set $\{t:\mu^{G}_t(\theta)>0\}$ has smallest eigenvalue at least $\gamma>0$, and strict complementarity holds. Since $\nabla^2_\mu G=\tfrac1\tau\,\E\bigl[\operatorname{diag}(\sigma)-\sigma\sigma^{\!\top}\bigr]$ is a softmax covariance, the restricted matrix is positive definite whenever the priced set is a proper subset of $\T$, which always holds because $t=0$ is never priced; (A5) merely quantifies this.
\end{itemize}

\paragraph{The IPW estimator.}
Exactly as in the implementation,
\begin{equation}
\label{eq:ipw-app2}
\widehat V_{\mathrm{IPW}}(\pi)
=\frac1N\sum_{i=1}^N \pi_{T_i}(X_i)\,\frac{Y_i}{e_{T_i}(X_i)} .
\end{equation}
Under (A1) and strong ignorability it is unbiased for a fixed policy \citep{swaminathan2015batch}:
$\E[\widehat V_{\mathrm{IPW}}(\pi)]
=\E_X\bigl[\sum_t\pi_t(X)\tfrac{e_t(X)}{e_t(X)}m_t(X)\bigr]=V(\pi)$.

We first bound the cost of smoothing: the oracle is deterministic, and the priced softmax class contains no deterministic policy at $\tau>0$, so comparing $\widehat\pi\in\Pi_b$ to $\pi^\star_b$ incurs an approximation error. The entropy identity prices it.

\begin{lemma}[Smoothing bias of the priced softmax class]
\label{lem:apx}
Define $\varepsilon_{\mathrm{apx}}(\Pi_b):=V(\pi^\star_b)-\sup_{\pi\in\Pi_b}V(\pi)$. Under (A0), if the score class is well-specified, i.e.\ $m_{\cdot,\theta_0}=m_\cdot$ for some $\theta_0\in\Theta$, then
\[
\varepsilon_{\mathrm{apx}}(\Pi_b)\;\le\;\tau\log|\T|.
\]
\end{lemma}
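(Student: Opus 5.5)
The plan is a weak-duality sandwich at the population level, built around the single policy $\pi_0:=\pi_{\theta_0,\mu^{G}(\theta_0)}\in\Pi_b$ obtained from the well-specified parameter. Since $\sup_{\pi\in\Pi_b}V(\pi)\ge V(\pi_0)$, it suffices to show $V(\pi^\star_b)\le V(\pi_0)+\tau\log|\T|$. I will use three population objects at the true scores $m_\cdot$, all as functions of $\mu\ge0$:
\begin{itemize}
\item the hard dual $D(\mu)=\E_X[\max_t(m_t(X)-\mu_t)]+\sum_t b_t\mu_t$ from \eqref{eq:dual};
\item the population log-sum-exp surrogate $\bar G(\mu)$, i.e.\ \eqref{eq:G} with the sample average replaced by $\E_X$;
\item the population softmax-weighted dual $\bar F(\mu)$, i.e.\ \eqref{eq:F} with the same replacement.
\end{itemize}
Write $\bar\mu:=\mu^{G}(\theta_0)$ for the minimiser of $\bar G$ over $\mu\ge0$.

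\textbf{Step 1 (lower end of the sandwich).} By (A0), strong duality gives $V(\pi^\star_b)=\min_{\mu\ge0}D(\mu)$. Pointwise, $\tau\log\sum_t e^{a_t/\tau}\ge\max_t a_t$, so $\bar G(\mu)\ge D(\mu)$ for every $\mu$. Evaluating at $\bar\mu$ gives
\[
V(\pi^\star_b)\;\le\; D(\bar\mu)\;\le\;\bar G(\bar\mu).
\]

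\textbf{Step 2 (entropy identity).} The identity \eqref{eq:lse-entropy} from the proof of Proposition~\ref{prop:fg-bound} holds pointwise in $x$, not just on the sample. Taking expectations gives
\[
\bar G(\mu)=\bar F(\mu)+\tau\,\E_X[H(\sigma(X))]\le\bar F(\mu)+\tau\log|\T|.
\]
This is the population analogue of Proposition~\ref{prop:fg-bound}, with the same argument under $\E_X$.

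\textbf{Step 3 (collapsing $\bar F$ to a policy value).} Expanding the definition,
\[
\bar F(\bar\mu)=V(\pi_0)+\sum_t\bar\mu_t\bigl(b_t-\E_X[\pi_{0,t}(X)]\bigr).
\]
I will then apply the population version of Proposition~\ref{prop:feasibility}. The derivative is $\partial\bar G/\partial\mu_t=b_t-\E_X[\sigma_t(X;\bar\mu)]$, and the KKT conditions for $\min_{\mu\ge0}\bar G$ give complementary slackness: every term $\bar\mu_t(b_t-\E\pi_{0,t})$ vanishes. Hence $\bar F(\bar\mu)=V(\pi_0)$. Chaining Steps 1--3 yields $V(\pi^\star_b)\le V(\pi_0)+\tau\log|\T|$, which proves the lemma.

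The main obstacle is justifying Step 3 at the population level: I need that $\bar\mu$ exists and that the KKT conditions hold. I would copy the argument of Proposition~\ref{prop:feas-full}.
\begin{itemize}
\item \emph{Differentiation under $\E_X$.} Softmax weights are bounded by 1 and the outcome surface is bounded by (A2), so dominated convergence lets me differentiate $\bar G$ under the expectation, and $\bar G$ is convex and differentiable.
\item \emph{Coercivity.} The ray argument carries over verbatim, using $\sum_t b_t>1$ since $b_0=1$, so a minimiser exists.
\item \emph{Sufficiency of KKT.} Convexity with affine constraints makes the KKT conditions necessary and sufficient at that minimiser.
\end{itemize}
Uniqueness of $\bar\mu$ is not needed. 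Any minimiser works, so if $\mu^{G}$ is defined via a selection, the argument applies to whichever is chosen.
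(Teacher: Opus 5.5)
Your proof is correct and is essentially the paper's argument. The paper adds and subtracts $\mu^{G}$ and splits the gap into two terms: a score term bounded by $\tau\log|\T|$ (pointwise max bound for $\pi^\star_b$ plus the entropy identity), and a price term that is $\le 0$ (oracle feasibility plus complementary slackness for $\pi_0$); this is exactly your chain $V(\pi^\star_b)\le D(\bar\mu)\le\bar G(\bar\mu)\le\bar F(\bar\mu)+\tau\log|\T|=V(\pi_0)+\tau\log|\T|$ without the dual-function names. One small aside: your coercivity step needs every scarce arm to have $b_t>0$, not merely $\sum_t b_t>1$, since along $v=e_s$ with $b_s=0$ the ray argument fails; this is harmless, because the definition of $\Pi_b$ already presupposes that $\mu^{G}(\theta_0)$ exists.
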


\begin{proof}
Write $\mu^{G}:=\mu^{G}(\theta_0)$ and let $\pi^\tau:=\pi_{\theta_0,\mu^{G}}\in\Pi_b$ be the softmax policy at the true scores and their population $G$-prices. Decompose, adding and subtracting $\mu^{G}$,
\begin{align*}
V(\pi^\star_b)-V(\pi^\tau)
&=\E_X\Bigl[\textstyle\sum_t\bigl(\pi^\star_{b,t}-\pi^\tau_t\bigr)\bigl(m_t-\mu^{G}_t\bigr)\Bigr]\\
&\quad+\textstyle\sum_t\mu^{G}_t\bigl(\E_X[\pi^\star_{b,t}]-\E_X[\pi^\tau_t]\bigr).
\end{align*}
For the first term, pointwise in $x$: $\sum_t\pi^\star_{b,t}(x)(m_t(x)-\mu^{G}_t)$ is a single coordinate of the vector $m(x)-\mu^{G}$, hence at most $\max_t(m_t(x)-\mu^{G}_t)$; and by the entropy identity \eqref{eq:lse-entropy} applied at one point, $\sum_t\pi^\tau_t(x)(m_t(x)-\mu^{G}_t)\ge\max_t(m_t(x)-\mu^{G}_t)-\tau\log|\T|$. So the first term is at most $\tau\log|\T|$. For the second term, the population analogue of Proposition~\ref{prop:feas-full} (identical proof with $\E_X$ in place of the empirical mean) gives $\E_X[\pi^\tau_t]=b_t$ whenever $\mu^{G}_t>0$, while $\E_X[\pi^\star_{b,t}]\le b_t$ by (A0); hence the second term equals $\sum_{t:\mu^{G}_t>0}\mu^{G}_t(\E_X[\pi^\star_{b,t}]-b_t)\le0$. Combining the two bounds proves the claim.
\end{proof}

\begin{theorem}[Excess-value decomposition]
\label{thm:regret}
\textup{(i)} Under (A0)--(A4), with probability at least $1-\delta$ the end-to-end maximiser $\widehat\pi$ satisfies
\begin{equation}
\label{eq:e2e-bound}
V(\pi^\star_b)-V(\widehat\pi)\;\le\;
\varepsilon_{\mathrm{apx}}(\Pi_b)
+\frac{C\,B_Y}{\underline e}\sqrt{\frac{\mathfrak C_N+\log(1/\delta)}{N}}
\end{equation}
for a universal constant $C$, with $\varepsilon_{\mathrm{apx}}(\Pi_b)\le\tau\log|\T|$ under the well-specification of Lemma~\ref{lem:apx}.

\textup{(ii)} If in addition (A5) holds and the deployed prices $\widehat\mu$ are computed from the empirical inner problem $G(\cdot\,;M_{\widehat\theta})$ on an independent calibration split of size $N$, then, up to logarithmic factors, the bound \eqref{eq:e2e-bound} holds for the deployed policy $\pi_{\widehat\theta,\widehat\mu}$ with $C$ enlarged by $O\bigl(1/(\gamma\tau)\bigr)$.

\textup{(iii)} For any estimated scores $\widehat m$ and prices $\widehat\mu$, the two-stage plug-in policy $\widehat\pi_{2\mathrm s}(x)\in\argmax_t(\widehat m_t(x)-\widehat\mu_t)$ obeys the deterministic bound
\begin{multline}
\label{eq:2s-bound}
V(\pi^\star_b)-V(\widehat\pi_{2\mathrm s})\;\le\;
\Bigl(2\max_{t\in\T}\bigl\|\widehat m_t-m_t\bigr\|_{\infty,\mathcal B}\\
+\;2\bigl\|\widehat\mu-\mu^\star\bigr\|_\infty
+\bigl\|\mu^\star\bigr\|_\infty\Bigr)\,
\Prob\!\bigl(X\in\mathcal B\bigr),
\end{multline}
where $s_t:=m_t-\mu^\star_t$ are the penalised scores,
$\Delta(x):=\max_t|\widehat m_t(x)-m_t(x)|+\|\widehat\mu-\mu^\star\|_\infty$,
$\|f\|_{\infty,\mathcal B}:=\sup_{x\in\mathcal B}|f(x)|$, and
\[
\mathcal B:=\bigl\{x:\ s_{(1)}(x)-s_{(2)}(x)\le 2\Delta(x)\bigr\}
\]
is the price-boundary layer, with $s_{(1)}(x)\ge s_{(2)}(x)$ the two largest penalised scores at $x$.
\end{theorem}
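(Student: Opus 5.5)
The three parts use different arguments, so I would treat them separately: a uniform-deviation argument for (i), a price-perturbation layer on top of it for (ii), and a pointwise argmax-stability argument for (iii).

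\textbf{Part (i).} Split the excess value as
\[
V(\pi^\star_b)-V(\widehat\pi)=\bigl[V(\pi^\star_b)-\textstyle\sup_{\Pi_b}V\bigr]+\bigl[\textstyle\sup_{\Pi_b}V-V(\widehat\pi)\bigr].
\]
The first bracket is $\varepsilon_{\mathrm{apx}}(\Pi_b)$ by definition, and Lemma~\ref{lem:apx} bounds it by $\tau\log|\T|$ under well-specification.

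For the second bracket, take any $\pi_\epsilon\in\Pi_b$ with $V(\pi_\epsilon)\ge\sup V-\epsilon$. By (A4), $\widehat V_{\mathrm{IPW}}(\widehat\pi)\ge\widehat V_{\mathrm{IPW}}(\pi_\epsilon)$, so the second bracket is at most $2\sup_{\pi\in\Pi_b}|\widehat V_{\mathrm{IPW}}(\pi)-V(\pi)|+\epsilon$. Unbiasedness of \eqref{eq:ipw-app2} makes this supremum a centred empirical process over the functions $(x,t,y)\mapsto\pi_t(x)\,y/e_t(x)$, each bounded by $B_Y/\underline e$ under (A1)--(A2).

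Symmetrization plus McDiarmid then gives, with probability $1-\delta$, the bound $2\mathfrak R_N+ (B_Y/\underline e)\sqrt{2\log(1/\delta)/N}$. Multiplication by the fixed factor $Y_i/e_{T_i}(X_i)$, bounded by $B_Y/\underline e$, is handled by the Ledoux--Talagrand contraction (the map is coordinatewise $B_Y/\underline e$-Lipschitz). This reduces $\mathfrak R_N$ to $(B_Y/\underline e)\,\mathfrak R_N(\Pi_b)\le(B_Y/\underline e)\sqrt{\mathfrak C_N/N}$ by (A3). Letting $\epsilon\downarrow0$ yields \eqref{eq:e2e-bound}.

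\textbf{Part (ii)} is the step I expect to be the main obstacle. The deployed policy uses the empirical prices $\widehat\mu$ rather than $\mu^G(\widehat\theta)$, so I would add and subtract to get
\[
V(\pi_{\widehat\theta,\mu^G(\widehat\theta)})-V(\pi_{\widehat\theta,\widehat\mu})\le (B_Y/\tau)\,\|\widehat\mu-\mu^G(\widehat\theta)\|_1,
\]
using that softmax weights are $\tau^{-1}$-Lipschitz in $\mu$.

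To control the price error, condition on $\widehat\theta$; the calibration split is independent of it, so this is legitimate.
\begin{itemize}
\item Strict complementarity, plus uniform gradient concentration, keeps the empirical and population active sets equal with high probability.
\item On the common priced set, (A5) gives $\gamma$-strong convexity near the optimum.
\item Hence $\|\widehat\mu-\mu^G\|\le\gamma^{-1}\sup_\mu\|\nabla\widehat G-\nabla G\|$.
\end{itemize}
The gradient $\nabla G$ is $b-\bar\pi(\mu)$, so this supremum is exactly the uniform allocation deviation already controlled in the population part of Proposition~\ref{prop:feas-full}. That argument (Hoeffding plus a covering of a compact price box) gives $O_p(N^{-1/2})$ up to logarithms. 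Combining, the extra term is $O\bigl(B_Y/(\gamma\tau)\bigr)N^{-1/2}$, which is the enlargement of $C$ claimed.

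The delicate points in (ii) are two: the local-to-global step from strong convexity on a neighbourhood, and the active-set identification. For the latter I would try first a margin argument, which needs a quantitative strict-complementarity constant to be absorbed into the logarithmic factors.

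\textbf{Part (iii)} is a pointwise argument. Write $a^\star(x)=\pi^\star_b(x)$ and $\widehat a(x)$ for the plug-in choice, and let $\widehat s_t=\widehat m_t-\widehat\mu_t$, so that $|\widehat s_t-s_t|\le\Delta(x)$.

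\emph{Outside $\mathcal B$.} Here the top gap of the penalised scores $s$ exceeds $2\Delta(x)$, so perturbing every score by at most $\Delta$ cannot change the argmax. Thus $\widehat a=a^\star$ and the pointwise loss is zero.

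\emph{Inside $\mathcal B$.} Write
\[
m_{a^\star}-m_{\widehat a}=(s_{a^\star}-s_{\widehat a})+(\mu^\star_{a^\star}-\mu^\star_{\widehat a}).
\]
For the first term, $s_{a^\star}-s_{\widehat a}\le(\widehat s_{a^\star}-\widehat s_{\widehat a})+2\Delta\le2\Delta$, since $\widehat a$ maximises $\widehat s$. Moreover $2\Delta\le2\max_t\|\widehat m_t-m_t\|_{\infty,\mathcal B}+2\|\widehat\mu-\mu^\star\|_\infty$ on $\mathcal B$. For the second term, $\mu^\star\ge0$ gives $\mu^\star_{a^\star}-\mu^\star_{\widehat a}\le\|\mu^\star\|_\infty$.

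Taking expectations over $X$ and restricting to $\mathcal B$ gives \eqref{eq:2s-bound}.
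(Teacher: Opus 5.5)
Your proposal is correct and follows the paper's proof essentially step for step. For (i), it uses the empirical-welfare-maximisation decomposition with McDiarmid, symmetrisation and Talagrand contraction. For (ii), it uses the add-and-subtract with a Lipschitz-in-prices value bound, active-set identification via strict complementarity, strong monotonicity of $\nabla G$ on the priced set, and Hoeffding plus a covering argument. For (iii), it uses the score/price split with disagreements confined to $\mathcal B$.

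The differences are cosmetic: you use an $\epsilon$-near-maximiser in (i), and you argue argmax stability directly rather than by contrapositive in (iii). The two steps you flag as delicate in (ii), local-to-global strong convexity and active-set identification, are exactly the ones the paper also handles informally.
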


\begin{proof}
\emph{Part (i): end-to-end regret.}
The argument follows the empirical-welfare-maximisation template of \citet{kitagawa2018should}. Let $\pi\in\Pi_b$ be arbitrary. By (A4), $\widehat V_{\mathrm{IPW}}(\widehat\pi)\ge\widehat V_{\mathrm{IPW}}(\pi)$, so
\begin{align*}
V(\pi)-V(\widehat\pi)
&=\bigl[V(\pi)-\widehat V_{\mathrm{IPW}}(\pi)\bigr]\\
&\quad+\bigl[\widehat V_{\mathrm{IPW}}(\pi)-\widehat V_{\mathrm{IPW}}(\widehat\pi)\bigr]\\
&\quad+\bigl[\widehat V_{\mathrm{IPW}}(\widehat\pi)-V(\widehat\pi)\bigr]\\
&\;\le\;2\sup_{\pi'\in\Pi_b}\bigl|\widehat V_{\mathrm{IPW}}(\pi')-V(\pi')\bigr|.
\end{align*}
Taking the supremum over $\pi\in\Pi_b$ and adding $\varepsilon_{\mathrm{apx}}(\Pi_b)$ reduces \eqref{eq:e2e-bound} to a uniform deviation bound. The summands of \eqref{eq:ipw-app2} lie in $[-B_Y/\underline e,\,B_Y/\underline e]$ by (A1)--(A2), so changing one observation changes $\widehat V_{\mathrm{IPW}}$, and hence the supremum, by at most $2B_Y/(\underline eN)$; McDiarmid's inequality gives, with probability at least $1-\delta$,
\begin{multline*}
\sup_{\pi\in\Pi_b}\bigl|\widehat V_{\mathrm{IPW}}(\pi)-V(\pi)\bigr|
\le \E\Bigl[\sup_{\pi\in\Pi_b}\bigl|\widehat V_{\mathrm{IPW}}(\pi)-V(\pi)\bigr|\Bigr]\\
+\frac{B_Y}{\underline e}\sqrt{\frac{2\log(1/\delta)}{N}} .
\end{multline*}
Unbiasedness of $\widehat V_{\mathrm{IPW}}$ and standard symmetrisation bound the expected supremum by $2\mathfrak R_N(\mathcal F)$ with $\mathcal F=\{(x,t,y)\mapsto\pi_t(x)\,y/e_t(x):\pi\in\Pi_b\}$. Each coordinate map $u\mapsto u\,y_i/e_{t_i}(x_i)$ is $(B_Y/\underline e)$-Lipschitz, so Talagrand's contraction lemma yields $\mathfrak R_N(\mathcal F)\le(B_Y/\underline e)\,\mathfrak R_N(\Pi_b)\le(B_Y/\underline e)\sqrt{\mathfrak C_N/N}$ by (A3). Collecting terms,
\[
V(\pi^\star_b)-V(\widehat\pi)\le
\varepsilon_{\mathrm{apx}}
+\frac{4B_Y}{\underline e}\sqrt{\frac{\mathfrak C_N}{N}}
+\frac{2B_Y}{\underline e}\sqrt{\frac{2\log(1/\delta)}{N}},
\]
and Cauchy--Schwarz merges the last two terms into the stated form.

\emph{Part (ii): price-calibration error.}
Write $\mu^{G}:=\mu^{G}(\widehat\theta)$, $g(\mu):=\nabla_\mu G_{\mathrm{pop}}(\mu)$ and $\widehat g(\mu):=\nabla_\mu G(\mu;M_{\widehat\theta})$ on the calibration split, so that, by the gradient computation in Proposition~\ref{prop:feas-full}, $g_t(\mu)=b_t-\E_X[\sigma_t(\mu)]$ and $\widehat g_t(\mu)=b_t-\frac1N\sum_i\sigma_{t,i}(\mu)$. Conditional on $\widehat\theta$, $\widehat g-g$ is a centred empirical mean of $[0,1]$-bounded softmax weights; Hoeffding's inequality, a union bound over the $|\T|$ coordinates, and a covering argument over a compact price box (the weights are $\tau^{-1}$-Lipschitz in $\mu$) give $\sup_\mu\|\widehat g(\mu)-g(\mu)\|_\infty=O_p(\sqrt{\log N/N})$. By strict complementarity in (A5), the priced sets of $\widehat\mu$ and $\mu^{G}$ coincide with probability tending to one; both vectors vanish off that set, and on it $g(\mu^{G})=0$ and $\widehat g(\widehat\mu)=0$. Local strong convexity with modulus $\gamma$ then gives
\begin{align*}
\gamma\,\|\widehat\mu-\mu^{G}\|^2
&\le\bigl\langle g(\widehat\mu)-g(\mu^{G}),\,\widehat\mu-\mu^{G}\bigr\rangle\\
&=\bigl\langle g(\widehat\mu)-\widehat g(\widehat\mu),\,\widehat\mu-\mu^{G}\bigr\rangle\\
&\le\|\widehat g-g\|_\infty'\,\|\widehat\mu-\mu^{G}\|,
\end{align*}
(with $\|\cdot\|_\infty'$ the supremum over the box), hence $\|\widehat\mu-\mu^{G}\|=O_p\bigl(\gamma^{-1}\sqrt{\log N/N}\bigr)$: the implicit price map is $O(1/\gamma)$-Lipschitz in the empirical fluctuation of the allocation. Finally, the map $\mu\mapsto V(\pi_{\widehat\theta,\mu})$ is Lipschitz: the policy Jacobian is $\partial\pi/\partial\mu=-\tfrac1\tau[\operatorname{diag}(\sigma)-\sigma\sigma^{\!\top}]$, whose column sums are bounded by $2\sigma_s(1-\sigma_s)/\tau$, so $\|\pi(\mu)-\pi(\mu')\|_1\le\tfrac2\tau\|\mu-\mu'\|_\infty$ and $|V(\pi_{\widehat\theta,\mu})-V(\pi_{\widehat\theta,\mu'})|\le\tfrac{2B_Y}{\tau}\|\mu-\mu'\|_\infty$. Decomposing
\begin{multline*}
V(\pi^\star_b)-V(\pi_{\widehat\theta,\widehat\mu})
=\bigl[V(\pi^\star_b)-V(\pi_{\widehat\theta,\mu^{G}})\bigr]\\
+\bigl[V(\pi_{\widehat\theta,\mu^{G}})-V(\pi_{\widehat\theta,\widehat\mu})\bigr],
\end{multline*}
the first bracket is bounded by part~(i) since $\pi_{\widehat\theta,\mu^{G}}\in\Pi_b$, and the second is $O_p\bigl(\tfrac{B_Y}{\gamma\tau}\sqrt{\log N/N}\bigr)$, of the same $N^{-1/2}$ order as \eqref{eq:e2e-bound} up to logarithms and absorbed into $C$ enlarged by $O(1/(\gamma\tau))$.

\emph{Part (iii): two-stage boundary-layer bound.}
Write $\widehat s_t:=\widehat m_t-\widehat\mu_t$, and let $a^\star(x)$ and $\widehat a(x)$ denote the argmax rules of $\pi^\star_b$ and $\widehat\pi_{2\mathrm s}$. Since $m_t=s_t+\mu^\star_t$, the excess value splits into a score term and a price term:
\begin{equation}
\label{eq:2s-split}
V(\pi^\star_b)-V(\widehat\pi_{2\mathrm s})
=\E_X\bigl[s_{a^\star}-s_{\widehat a}\bigr]
+\E_X\bigl[\mu^\star_{a^\star}-\mu^\star_{\widehat a}\bigr].
\end{equation}
Both integrands vanish wherever $\widehat a(x)=a^\star(x)$, so only disagreement points contribute. Fix $x$ with $\widehat a(x)=b\ne a=a^\star(x)$. Then $\widehat s_b(x)\ge\widehat s_a(x)$ while $s_a(x)\ge s_b(x)$, so
\begin{align*}
0\le s_a(x)-s_b(x)
&\le\bigl[s_a-s_b\bigr](x)-\bigl[\widehat s_a-\widehat s_b\bigr](x)\\
&=\bigl[(s_a-\widehat s_a)+(\widehat s_b-s_b)\bigr](x)\\
&\le\;2\max_t\bigl|\widehat s_t-s_t\bigr|(x)
\;\le\;2\Delta(x),
\end{align*}
using $|\widehat s_t-s_t|\le|\widehat m_t-m_t|+\|\widehat\mu-\mu^\star\|_\infty$. In particular the top-two gap at $x$ obeys $s_{(1)}(x)-s_{(2)}(x)\le s_a(x)-s_b(x)\le2\Delta(x)$, so every disagreement point lies in $\mathcal B$. The score term of \eqref{eq:2s-split} is therefore
\begin{multline*}
\E_X\bigl[(s_{a^\star}-s_{\widehat a})\mathbf 1\{X\in\mathcal B\}\bigr]\\
\le 2\,\E_X\bigl[\Delta(X)\mathbf 1\{X\in\mathcal B\}\bigr],
\end{multline*}
which is at most
\[
\bigl(2\max_t\|\widehat m_t-m_t\|_{\infty,\mathcal B}+2\|\widehat\mu-\mu^\star\|_\infty\bigr)\Prob(X\in\mathcal B).
\]
For the price term, $\mu^\star_{a^\star}-\mu^\star_{\widehat a}\le\|\mu^\star\|_\infty$ pointwise and the integrand vanishes off the disagreement set, so it is at most $\|\mu^\star\|_\infty\,\Prob(X\in\mathcal B)$. Summing the two bounds gives \eqref{eq:2s-bound}.
\end{proof}

Two readings of part~(iii). First, it is the decision-blindness thesis of this paper in bound form: the outcome error $\widehat m-m$ enters only through its supremum \emph{on the price-boundary layer} $\mathcal B$, weighted by the boundary mass $\Prob(X\in\mathcal B)$; accuracy away from the boundary is invisible to deployed value, yet squared-error regression pays for it everywhere. Second, the price term $\|\mu^\star\|_\infty\Prob(X\in\mathcal B)$ is not an artefact of the proof: when the plug-in scores rank a positively-priced arm out entirely, an entire $b_t$-mass of assignments moves off that arm and the term contributes on the order of $\mu^\star_t b_t$. This is precisely the mechanism-dataset failure mode of Section~\ref{sec:exp-mechanism}, where the linear family's fitted arm-difference is flat and the deployed share of the capped arm is $0.000$: misallocated priced capacity costs its shadow price, not its prediction residual.

\begin{remark}[Estimated propensities]
\label{rem:est-prop}
If $e_t$ must be estimated, as on the observational designs, replace $\widehat V_{\mathrm{IPW}}$ by its augmented (doubly-robust) form; part~(i) then holds with $\mathfrak C_N$ augmented by the nuisance-class complexity and an additional $\|\widehat e-e\|_2\,\|\widehat m-m\|_2$ product term that is second-order under cross-fitting \citep{athey2021policy}. The clean single-rate statement \eqref{eq:e2e-bound} is recovered whenever propensities are known, as in the randomised trials of the evaluation suite.
\end{remark}

\begin{remark}[What remains open]
\label{rem:open}
The one non-discharged assumption is (A4): gradient ascent on the nonconvex outer objective need not reach the global IPW maximiser, so \eqref{eq:e2e-bound} bounds the regret of the statistically optimal end-to-end policy, not of any particular training iterate. Closing this gap would require a landscape or trajectory analysis of the coupled score-network and price-map dynamics and is genuinely open; it is the same gap left open by empirical-welfare-maximisation guarantees over nonconvex classes \citep{kitagawa2018should,athey2021policy}. Parts (ii) and (iii) are unconditional given the fitted models.
\end{remark}

\section{Training Loops}
\label{app:algorithms}

Algorithms~\ref{alg:alt} and~\ref{alg:f} state the alternating and end-to-end training loops of Section~\ref{sec:bilevel}. The convex pipeline (method G) is Algorithm~\ref{alg:f} with the inner solve minimising $G$ instead of $F$; convexity makes the backward's regularity assumptions hold automatically.

\begin{algorithm}[h]
\caption{Alternating-optimisation pipeline (Alt).}
\label{alg:alt}
\begin{algorithmic}[1]
\Require Dataset $\{(X_i,T_i,Y_i,\widehat e_{T_i}(X_i))\}_{i=1}^N$; capacities $b$; temperature $\tau$; outer steps $S$; inner $\theta$-steps $K$.
\State Initialise $\theta$; set $\mu^{(0)}\!\gets 0$.
\For{$s = 1,\dots,S$}
  \State $\mu^{(s)} \gets \argmin_{\mu\ge 0}\,F(\mu;M_\theta)$ via L-BFGS-B, warm-started from $\mu^{(s-1)}$
  \For{$k = 1,\dots,K$}
    \State $M \gets m_\theta(X)$
    \State $\pi_{i,t} \gets \softmax\!\bigl((M_i-\mu^{(s)})/\tau\bigr)_t$ \Comment{$\mu^{(s)}$ fixed}
    \State $\widehat V_{\mathrm{IPW}} \gets \tfrac{1}{N}\sum_{i=1}^N \pi_{i,T_i}\,Y_i / \widehat e_{T_i}(X_i)$
    \State $\theta \gets$ optimiser step on $-\widehat V_{\mathrm{IPW}}$ \Comment{no gradient through $\mu^{(s)}$}
  \EndFor
\EndFor
\State \Return $\theta$.
\end{algorithmic}
\end{algorithm}

\begin{algorithm}[h]
\caption{End-to-end pipeline with inner objective $F$ (method F); replace $F$ by $G$ for method G.}
\label{alg:f}
\begin{algorithmic}[1]
\Require Dataset $\{(X_i,T_i,Y_i,\widehat e_{T_i}(X_i))\}_{i=1}^N$; capacities $b$; temperature $\tau$; outer steps $S$.
\State Initialise $\theta$; set warm-start prices $\mu^{(0)}\!\gets 0$.
\For{$s = 1,\dots,S$}
  \State $M \gets m_\theta(X)\in\R^{N\times|\T|}$ \Comment{$M$ detached inside the inner solve}
  \State $\mu^\star \gets \argmin_{\mu\ge 0}\,F(\mu;M)$ via L-BFGS-B, warm-started from $\mu^{(s-1)}$
  \State $\pi_{i,t} \gets \softmax\!\bigl((M_i-\mu^\star)/\tau\bigr)_t$
  \State $\widehat V_{\mathrm{IPW}} \gets \tfrac{1}{N}\sum_{i=1}^N \pi_{i,T_i}\,Y_i / \widehat e_{T_i}(X_i)$
  \State Backward: propagate $\partial(-\widehat V_{\mathrm{IPW}})/\partial\theta$ through the policy and the implicit-differentiation layer of Section~\ref{app:ift}
  \State $\theta \gets$ optimiser step; $\mu^{(s)} \gets \mu^\star$
\EndFor
\State \Return $\theta$.
\end{algorithmic}
\end{algorithm}

\begin{figure}[t]
\centering
\resizebox{0.9\columnwidth}{!}{%
\begin{tikzpicture}[
  >={Latex[length=2mm]},
  font=\small,
  arr/.style={->, thick, draw=black!75},
  res/.style={->, thick, draw=blue!55!black, dashed},
  serve/.style={->, thick, draw=green!50!black},
  box/.style={draw, rounded corners, minimum height=0.95cm, minimum width=2.1cm, align=center},
  qbox/.style={draw, minimum height=0.75cm, minimum width=1.4cm, align=center, fill=blue!8}
]
\node[box, fill=gray!10] (arrival) at (0,0) {arrival $X_i$};
\node[box, fill=orange!12] (policy) at (3.4,0) {learned policy\\$\pi(\cdot\!\mid\!X_i)$};
\node[qbox] (Qt) at (7.5,0) {$Q_t$};
\node[anchor=west] (served) at (9.4,0) {served};
\draw[arr] (arrival) -- (policy);
\draw[arr] (policy) -- node[midway, above, font=\scriptsize] {$T_i\!\sim\!\pi(\cdot\!\mid\!X_i)$} (Qt);
\draw[serve] (Qt) -- (served);
\draw[res] (7.5, 1.05) node[above, font=\scriptsize, inner sep=1pt] {$\mathrm{Pois}(b_t\lambda)$} -- (Qt.north);
\node[font=\scriptsize, align=center] at (7.5, -0.95) {(one queue $Q_t$ and one resource\\process per treatment $t\in\T\!\setminus\!\{0\}$)};
\end{tikzpicture}}
\caption{The per-treatment queueing system used at deployment.}
\label{fig:repair}
\end{figure}

\section{The Regime Grid}
\label{app:regime}

\begin{figure*}[t]
\centering
\includegraphics[width=0.72\textwidth]{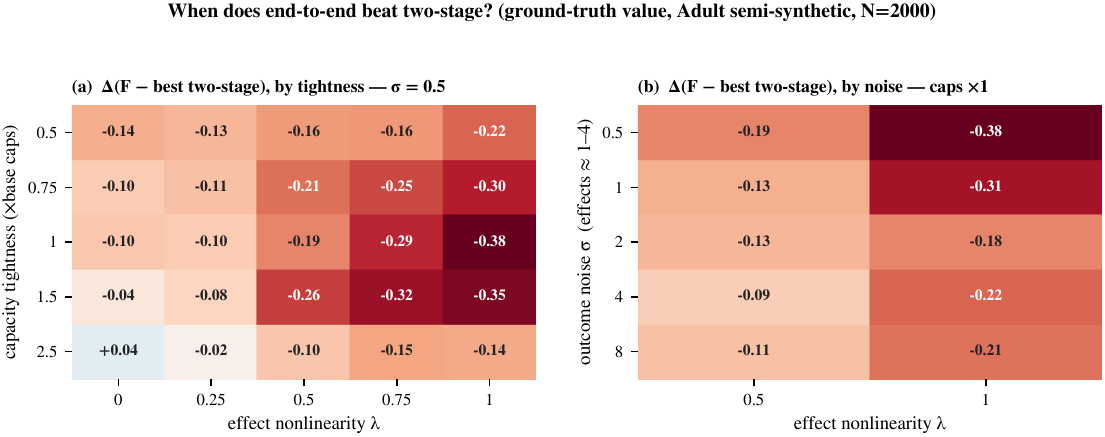}
\caption{Ground-truth value gap (best end-to-end minus best two-stage) over the regime grid on the Adult semi-synthetic generator: constraint tightness $\times$ outcome nonlinearity at two noise levels. Blue would favour end-to-end; the grid refutes the pre-specified hypothesis that such a region exists at these sample sizes, and we report it as found.}
\label{fig:regime}
\end{figure*}

The grid sweeps constraint tightness, outcome nonlinearity, and outcome noise over $175$ cells on the Adult semi-synthetic generator, with the full method suite trained in every cell (10 seeds). Two-stage tree regression leads ground-truth deployed value in every cell tested; noise narrows the gap (from $-0.38$ to $-0.21$) but never flips it. End-to-end F beats the alternating variant in 24 of 25 cells of the corresponding comparison grid, isolating the value of the implicit gradient.

\section{Implicit-Function Derivation of $\mathrm{d}\mu^\star/\mathrm{d}M$}
\label{app:ift}

Let $A(\mu)=\{t:\mu_t=0\}$ denote the active set of the non-negativity constraint at $\mu$ and $\mathcal I(\mu)=\T\setminus A(\mu)$ the inactive set. The KKT conditions of the inner problem \eqref{eq:F} are
\begin{equation}
\label{eq:kkt}
\nabla_\mu F(\mu;M)-\lambda =0,\quad \mu_t\,\lambda_t=0,\;\;\mu_t\ge 0,\;\;\lambda_t\ge 0,
\end{equation}
where $\lambda$ is the multiplier on $\mu\ge 0$. At a stationary point $\mu^\star$ with active set $A$, $\lambda_t^\star=\partial_{\mu_t}F(\mu^\star;M)$ on $A$ and $\lambda_t^\star=0$ on $\mathcal I$. The inactive coordinates therefore satisfy
\begin{equation}
\label{eq:foc-inactive}
\nabla_{\mu_{\mathcal I}} F(\mu^\star;M) = 0 .
\end{equation}
Differentiating \eqref{eq:foc-inactive} in $M$ (treating $A$ as locally constant) gives
\begin{equation}
\label{eq:ift-system}
H_{\mathcal I\mathcal I}\;\frac{\partial \mu^\star_{\mathcal I}}{\partial M}
+\nabla^2_{\mu_{\mathcal I},M}\,F(\mu^\star;M)=0,
\end{equation}
where $H_{\mathcal I\mathcal I}=\nabla^2_{\mu_{\mathcal I}\mu_{\mathcal I}}F(\mu^\star;M)$. On the active set, $\mu^\star_t\equiv 0$ locally, so $\partial \mu^\star_t/\partial M=0$ for $t\in A$.

\paragraph{Augmented KKT system used in the implementation.}
Rather than reducing to the inactive sub-block, the implementation differentiates the KKT conditions on $|\T|+|A|$ unknowns $(\mu^\star,\lambda_A)$. Stacking $\nabla_\mu F(\mu^\star;M)-\lambda=0$ over $\T$ with $\mu^\star_t=0$ for $t\in A$ and differentiating in $M$ gives the saddle-point system
\begin{equation}
\label{eq:ift-vjp}
\underbrace{\begin{bmatrix} H+\rho I_{|\T|} & -E_A^{\!\top} \\ E_A & 0 \end{bmatrix}}_{=:J}\,
\begin{bmatrix} \partial \mu^\star/\partial M \\ \partial \lambda_A/\partial M \end{bmatrix}
=
-\begin{bmatrix} \nabla^2_{\mu,M} F(\mu^\star;M) \\ 0 \end{bmatrix},
\end{equation}
where $H=\nabla^2_\mu F(\mu^\star;M)$, $E_A\in\{0,1\}^{|A|\times|\T|}$ selects active coordinates, and $\rho=10^{-6}$ is a stabilising ridge. The bottom block enforces $(\partial \mu^\star/\partial M)_{A,\cdot}=0$, recovering \eqref{eq:ift-system} on the inactive coordinates. The implementation solves \eqref{eq:ift-vjp} by a single dense linear solve, takes the first $|\T|$ rows as $\partial \mu^\star/\partial M$, and forms the vector--Jacobian product $(\partial \mu^\star/\partial M)^{\!\top}g$ for the upstream gradient $g=\partial\mathcal L/\partial \mu^\star\in\R^{|\T|}$ with $\mathcal L=-\widehat V_{\mathrm{IPW}}$. The system is small ($|\T|+|A|\le 2|\T|$, $|\T|\le 10$ in our experiments), so the dense solve is cheap; the equivalent reduction $H_{\mathcal I\mathcal I}\lambda_{\mathcal I}=g_{\mathcal I}$ on the inactive sub-block produces the same product.

\paragraph{Assumptions.}
For \eqref{eq:ift-system} to define $\mathrm{d}\mu^\star/\mathrm{d}M$ unambiguously we need (i)~a regular interior optimum on $\mathcal I$, so $H_{\mathcal I\mathcal I}$ is non-singular (the ridge makes this operational), and (ii)~strict complementarity, so the active set does not change instantaneously under a perturbation of $M$. These are the standard regularity conditions of differentiable optimisation layers \citep{amos2017optnet,agrawal2019differentiable}; under them the Implicit Function Theorem \citep{krantz2002implicit} gives local differentiability of $\mu^\star(M)$. Empirically, $H_{\mathcal I\mathcal I}$ was positive-definite at every iterate encountered. Re-classifying the active set before each backward makes the procedure robust to set changes between outer iterations.

\paragraph{Implementation notes.}
The forward is L-BFGS-B \citep{byrd1995limited} with bounds $[0,\infty)^{|\T|}$ and tolerances $10^{-10}$ (max $200$ iterations), warm-started from the previous outer iterate. The objective and gradient come from a small automatic-differentiation graph wrapped behind a numpy interface; $M$ and $b$ are detached inside the inner solve. The backward is a custom autograd function that
(1)~identifies the active set $A=\{t:\mu^\star_t<10^{-7}\}$;
(2)~rebuilds a differentiable graph for $F$ at $(\mu^\star,M)$ and materialises the $|\T|\times|\T|$ Hessian $H$ row by row;
(3)~assembles the saddle-point matrix $J$ of \eqref{eq:ift-vjp};
(4)~builds the right-hand side from the rows of $\nabla^2_{\mu,M}F(\mu^\star;M)$;
(5)~solves $J\,U=-R$ in one dense solve and returns the vector--Jacobian product $\partial\mathcal L/\partial M=(\partial\mu^\star/\partial M)^{\!\top}g$. Capacities $b$ and temperature $\tau$ receive no gradients. The convex $G$-pipeline uses a structurally identical implementation; convexity guarantees a positive-definite inner Hessian at the unique minimum, so the system is well-conditioned at every outer step.

\section{Additional Baselines and Ablations}
\label{app:ablations}

\begin{figure*}[t]
\centering
\includegraphics[width=0.82\textwidth]{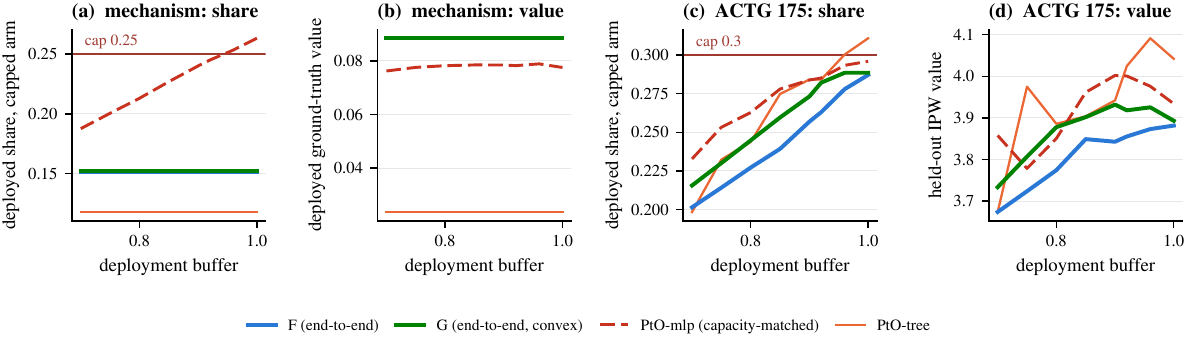}
\caption{Buffer-sweep ablation at deployment scale (10 seeds). (a, b) Mechanism dataset: the end-to-end share and value are invariant to the buffer, while PtO-mlp's value never reaches theirs at any buffer. (c, d) ACTG 175: where values tie, the buffer trades share against value smoothly for every method.}
\label{fig:buffer}
\end{figure*}

\paragraph{The unconstrained off-policy baseline.}
We evaluate directly the claim that unconstrained off-policy learning leaves feasibility to the queue. \textbf{IPW-unc} trains the same score network by plain IPW ascent with prices frozen at zero (implemented as the alternating trainer with every capacity set to one: at unit caps the convex surrogate's inner minimiser is provably $\mu^\star=0$, since $\partial G/\partial\mu_t|_{\mu=0}=1-\bar\pi_t\ge0$, and the trainer's nonconvex inner solver, warm-started at zero, returned $\mu=0$ at every refresh in our runs) and deploys as pure argmax through the same queue at the true caps. At deployment scale, averaged over 10 seeds (capped-arm mass vs.\ total scarce capacity; unserved fraction $u$; median wait $W$ against F's):
\begin{center}\small
\begin{tabular}{@{}lrrrr@{}}
\toprule
Dataset & mass (cap.) & $u$ & $W$ & $W_{\mathrm F}$ \\
\midrule
Adult semi & $1.00\;(0.56)$ & $0.43$ & $554$ & $16$ \\
Non-nested & $1.00\;(0.90)$ & $0.86$ & $882$ & $16$ \\
ACTG 175 & $0.94\;(0.90)$ & $0.14$ & $278$ & $4$ \\
Criteo & $0.65\;(0.50)$ & $0.02$ & $79$ & $1$ \\
Diabetes 130 & $0.52\;(0.50)$ & $0.00$ & $63$ & $2$ \\
Mechanism & $0.15\;(0.25)$ & $0.00$ & $0.1$ & $0.2$ \\
\bottomrule
\end{tabular}
\end{center}
On five of six datasets the deployed mass exceeds total scarce capacity; on the two ground-truth suites the entire population is routed to capped arms and $43\%$--$86\%$ of arrivals are never served. Where the learned signal is too weak at pure-IPW training to overshoot (the mechanism dataset), the baseline is feasible by accident, which is the point: with $\mu\equiv0$, nothing in training limits the load sent to the capped arms, so that load varies arbitrarily with the seed.

\paragraph{Self-normalised outer objective.}
Replacing the IPW outer by its self-normalised form (SNIPS), a one-line change, and retraining F at deployment scale gives, paired per seed against F: Adult semi-synthetic ground-truth value $1.200$ vs.\ $1.259$ ($-0.059$, $t=-2.4$); Diabetes-130 held-out IPW value $0.914$ vs.\ $0.989$ ($-0.075$, $t=-10.7$); allocations and waits essentially unchanged. Self-normalisation trades the weight variance for a ratio bias that shifts the decision margin, and at these sample sizes the trade is a loss. The variance mechanism identified in Section~\ref{sec:discussion} is real, but its fix must preserve the margin; a doubly-robust outer with cross-fitted nuisances is the future-work item, not SNIPS.

\paragraph{Deployment-buffer ablation.}
Could a tighter deployment buffer substitute for decision-aware training? Figure~\ref{fig:buffer} sweeps the buffer over $[0.70,1.00]$ at deployment scale with nothing retrained: the buffer enters only the deployment LP (scaled scarce-arm caps). The answer is no, in three distinct ways. For the end-to-end methods the knob never engages: their trained allocations already sit well under the caps, so share, value, and waits are identical at every buffer (mechanism dataset: share $0.15$, deployed value $0.088$ throughout, the same number as Section~\ref{sec:exp-mechanism}). For the capacity-matched PtO-mlp, tightening restores feasibility (share $0.26\to0.19$, median wait $12\to0.2$) but its deployed value stays flat at $0.076$--$0.079$, below end-to-end at every buffer: truncating at a noisy margin changes how many are served, not whether the right people are ranked in. For PtO-tree the buffer never binds at all, because blindness to the effect makes it under-allocate from the start. On ACTG 175, where values tie, tightening trades share against value smoothly for every method alike: feasibility can be bought there, but only by a buffer chosen per dataset and method after the fact, whereas the end-to-end policies are feasible at the a-priori $0.92$ across the suite, as Proposition~\ref{prop:feasibility} guarantees.

\section{Reproducibility Details}
\label{app:repro}

All sweeps train every method on identical data with 10 seeds per $(N,\mathrm{seed})$ cell and deploy through identical paired queueing streams. The score model is a three-layer, 64-unit $\tanh$ trunk with a $|\T|$-dimensional linear head for every end-to-end method; the capacity-matched PtO-mlp baseline uses per-arm networks with the same trunk shape, width, training steps, and learning rate. Temperatures are $\tau=0.03$ for F and Alt (the deployment sharpness, at which the softmax is near-argmax) and $\tau=0.1$ for G (whose convex inner problem is better conditioned at larger $\tau$, and whose feasibility guarantee holds at every $\tau$); the end-to-end deployment LP uses a $0.92$ capacity buffer. All three constants were fixed a priori on the first synthetic configuration, never tuned per dataset, and applied identically across the suite. Optimisation hyperparameters likewise took exactly one value each across every dataset and method, fixed at standard defaults during initial development on that same configuration and never varied in any reported comparison: Adam at learning rate $5\times10^{-3}$ for $500$ full-batch steps for every learned method (the PtO-mlp per-arm networks match the end-to-end trunk in depth, width, step count, and learning rate by construction), price refreshes every $5$ outer steps for Alt, and inner solves by L-BFGS-B to tolerance $10^{-10}$ with iteration caps of $200$ ($F$) and $500$ ($G$). The only hyperparameter grids run at any point are the reported sensitivity studies, used to map robustness rather than to select settings: the temperature study sweeps $\tau$ over $\{0.01,0.02,0.05,0.1,0.2,0.5,1\}$ (Section~\ref{sec:studies}), the buffer ablation sweeps the deployment buffer over $[0.70,1.00]$ (Figure~\ref{fig:buffer}), and every headline comparison is reported across the full sample-size grid rather than at a selected $N$. No hyperparameter was selected on evaluation data. Propensities are known for the randomised datasets and estimated by (multinomial) logistic regression, fit on the training split only, elsewhere; feature standardisation is likewise fit on the training split only. All experiments run on CPU: each $(N,\mathrm{seed})$ cell is a single-core job with 8\,GB of memory on a Linux x86-64 compute cluster (AMD EPYC 7513 and Intel Xeon Gold 6130 nodes), and a cell trains every method and runs its queueing simulations end to end. Software: Python 3.11, PyTorch 2.6, NumPy 2.2, SciPy 1.17, scikit-learn 1.8, pandas 3.0. Seeds: each cell seeds NumPy and PyTorch with its cell seed before training, and the paired simulation streams are generated from the same seed, as described in Section~\ref{sec:repair}. Code for all methods, the queueing harness, the index, and every experiment is available at \url{https://github.com/mahdisalmani/end2end-capacity-constrained-policy-learning}.

\end{document}